\documentclass{article}

\PassOptionsToPackage{numbers, compress}{natbib}
 \usepackage[main, final]{neurips_2026}
\usepackage[utf8]{inputenc} % allow utf-8 input
\usepackage[T1]{fontenc}    % use 8-bit T1 fonts
\usepackage{hyperref}       % hyperlinks
\usepackage{url}            % simple URL typesetting
\usepackage{booktabs}       % professional-quality tables
\usepackage{amsfonts}       % blackboard math symbols
\usepackage{nicefrac}       % compact symbols for 1/2, etc.
\usepackage{microtype}      % microtypography
\usepackage{xcolor}         % colors

\usepackage{algorithm}
\usepackage{algpseudocode}

\usepackage{dblfloatfix}
\usepackage{listings} % replaces minted, which needs -shell-escape (disabled on arXiv)
\usepackage{enumitem}

\usepackage{amsmath}
\usepackage{amssymb}
\usepackage{mathtools}
\usepackage{amsthm}
\usepackage{graphicx}
\usepackage{subcaption}
\usepackage{multirow}
\usepackage{wrapfig}
\usepackage[capitalize,noabbrev]{cleveref}

\usepackage{xspace}
\makeatletter
\DeclareRobustCommand{\eg}{e.g.\@\xspace}

\makeatother

\usepackage{xcolor}
\definecolor{codebackground}{RGB}{240, 240, 235}
\definecolor{codekeyword}{rgb}{0.00,0.50,0.00}
\definecolor{codenamespace}{rgb}{0.00,0.00,1.00}
\definecolor{codestring}{rgb}{0.73,0.13,0.13}
\definecolor{codeoperator}{rgb}{0.40,0.40,0.40}
\definecolor{codecomment}{rgb}{0.24,0.48,0.48}
\lstdefinestyle{b2pcode}{
  language=Python,
  basicstyle=\ttfamily,
  columns=fullflexible,
  keepspaces=true,
  showstringspaces=false,
  keywordstyle=\color{codekeyword}\bfseries,
  emph={b2p},
  emphstyle=\color{codenamespace}\bfseries,
  stringstyle=\color{codestring},
  commentstyle=\color{codecomment}\itshape,
  literate=*{=}{{\textcolor{codeoperator}{=}}}1
            {.}{{\textcolor{codeoperator}{.}}}1
            {0.05}{{\textcolor{codeoperator}{0.05}}}4
            {256}{{\textcolor{codeoperator}{256}}}3,
  backgroundcolor=\color{codebackground},
  frame=single,
  framerule=0pt,
  rulecolor=\color{codebackground},
  framesep=\fboxsep,
  xleftmargin=\fboxsep,
  xrightmargin=\fboxsep,
  aboveskip=9.23pt plus 2pt minus 2pt,
  belowskip=11.21pt plus 2pt minus 2pt,
}

\usepackage{amsmath,amsfonts,bm}

\def\eqref#1{equation~\ref{#1}}
\def\1{\bm{1}}

\DeclareMathAlphabet{\mathsfit}{\encodingdefault}{\sfdefault}{m}{sl}
\SetMathAlphabet{\mathsfit}{bold}{\encodingdefault}{\sfdefault}{bx}{n}

\newcommand{\E}{\mathbb{E}}

\newcommand{\R}{\mathbb{R}}

\newcommand{\Var}{\mathrm{Var}}

\newcommand{\Cov}{\mathrm{Cov}}
\title{B2P-Corr: Batch-to-Population Gradient Estimators for Non-Decomposable Correlation Losses}
\author{%
  Jingquan Yan\textsuperscript{*}, Yuwei Miao\textsuperscript{*}, Peiran Yu, Junzhou Huang \\
  Department of Computer Science and Engineering \\
  University of Texas at Arlington \\
  \texttt{\{jingquan.yan,yuwei.miao,peiran.yu,jzhuang\}@uta.edu} \\
    \textsuperscript{*}Equal contribution.
}

\theoremstyle{plain}
\newtheorem{theorem}{Theorem}[section]
\newtheorem{proposition}[theorem]{Proposition}
\newtheorem{lemma}[theorem]{Lemma}

\theoremstyle{definition}

\newtheorem{assumption}[theorem]{Assumption}
\theoremstyle{remark}
\newtheorem{remark}[theorem]{Remark}

\begin{document}

\maketitle

\begin{abstract}
Correlation metrics such as the Pearson Correlation Coefficient (PCC) and Concordance Correlation Coefficient (CCC) are standard regression metrics and learning objectives, but their non-decomposable nature poses a challenge for stochastic optimization. Since these metrics depend on dataset-level moments (mean, variance, covariance), naive mini-batch training with local statistics yields biased, high-variance gradients whose expectation does not align with the population gradient. To bridge this ``batch-to-population'' (B2P) gap, we propose \textbf{B2P-Corr}, a novel framework that transforms correlation objectives into decomposable pseudo-losses driven by lightweight global ``moment sketches''. We introduce two efficient variants: \textbf{B2P-EMA}, which utilizes exponential moving averages and stop-gradient operators to ensure asymptotic gradient tracking with $O(B)$ cost ($B$ is batch size); and \textbf{B2P-U}, which uses sampled cross-batch U-statistic estimators and a reservoir-based stabilization with an $O(B)$ implementation. Theoretically, we quantify the $O(B^{-1})$ bias of naive mini-batch optimization, prove a moving-target tracking result for B2P-EMA under a fast-sketch / slow-parameter regime, and provide an idealized control-variate analysis for B2P-U. Experiments on two synthetic and three real-world datasets demonstrate B2P-Corr achieves stable optimization and higher correlation than naive mini-batch optimization with negligible overhead.
\end{abstract}

\section{Introduction}

Stochastic optimization (\eg, SGD~\citep{ruder2016overview}) are most effective for \emph{decomposable} objectives, where the population risk is an expectation of per-example losses and mini-batch gradients are unbiased estimators of the population gradient. This assumption breaks for \emph{non-decomposable} objectives whose values depend on dataset-level statistics or cross-example interactions. Well-known examples include AUC/F-measures in classification \citep{sanyal2018optimizing} and correlation-based regression objectives such as the Pearson correlation coefficient (PCC) \citep{schober2018correlation} and concordance correlation coefficient (CCC) \citep{lawrence1989concordance}.

Correlation metrics are natural and crucial when relative trends matter more than absolute scale, \eg, in transcriptomics or human-annotated regression tasks where annotator-dependent scale shifts make absolute errors less informative \citep{skinnider2019evaluating}. However, they depend on global means, variances, and covariances. Replacing these population moments with mini-batch estimates makes the objective batch-dependent, so the resulting gradients are noisy and biased, especially in the small-batch regime.

Existing methods are incomplete. Larger batches reduce moment noise but are often memory-intensive and may hurt generalization \citep{goyal2017train,keskar2017largebatch}. Beyond large-batch strategy, prior work on non-decomposable objectives includes primal--dual or constrained formulations \citep{sanyal2018optimizing}, general stochastic compositional optimization \citep{wang2017stochastic}, coupled/compositional methods for AP and AUPRC-style objectives \citep{wang2022finite,qi2021soap}, and DRO-based formulations for partial AUC \citep{zhu2022auc}. These advances are important, but they mostly target classification metrics and do not directly resolve mini-batch correlation optimization in regression within a standard SGD pipeline.

We view this mismatch as a \emph{batch-to-population} (B2P) gap: the target correlation objective is global, but training only observes noisy local statistics. Our key observation is that, although PCC/CCC are non-decomposable as scalar objectives, their population gradients admit decomposable per-example forms whose coefficients depend only on a few global moments.

\paragraph{Contributions.}
We propose \textbf{B2P-Corr} (\Cref{fig:teaser}), which bridges this B2P gap by decoupling moment estimation from backpropagation via stop-gradient pseudo-losses and lightweight moment sketches. Specifically, we

(i) derive population gradients of PCC and CCC as decomposable per-example expectations depending on global prediction moments, and implement them via pseudo-losses with stop-gradient sketches.

(ii) introduce \textbf{B2P-EMA}, which tracks global moments with EMA, and under a standard fast-sketch / slow-parameter regime, it yields plug-in gradients that asymptotically track the moving population target at $O(B)$ cost;

(iii) further propose \textbf{B2P-U}, which is implemented via sampled incomplete U-statistics, with a reservoir-based stabilization to further stabilize moment estimation while remaining $O(B)$ cost.

Theoretically, we quantify the $O(B^{-1})$ bias of naive mini-batch PCC/CCC gradients, establish a tracking theorem for B2P-EMA, and provide an idealized control-variate analysis for B2P-U. We empirically show on synthetic and real-world benchmarks that B2P-Corr stabilizes training and improves correlation with negligible overhead.

% \paragraph{Contributions.}
% We propose \textbf{B2P-Corr}, a novel framework for optimizing PCC/CCC under mini-batch training:

% 1. We formalize the population gradients of PCC and CCC as expectations over per-example weights depending on global prediction moments, and show how to realize them via decomposable pseudo-losses and stop-gradient operators.

% 2. We introduce variant \textbf{B2P-EMA}, which maintains exponentially weighted running estimates of global moments and, under a standard fast-sketch / slow-parameter regime, yields plug-in gradients consistent with the moving-target population gradient with negligible cost.

% 3. We further propose \textbf{B2P-U}, combining incomplete U-statistics \citep{clemenccon2016incomplete} with cross-batch pairing and a control-variate to further reduce estimation variance.

% Theoretically, we quantify the $O(1/B)$ population-gradient bias of naive mini-batch PCC/CCC optimization, establish a moving-target tracking theorem for B2P-EMA, and give variance-reduction guarantees for B2P-U. Empirically, we validate on controlled synthetic diagnostics and three real-world benchmarks, showing B2P-Corr is a practical drop-in replacement that stabilizes training and improves correlation performance.

\section{Related Work}
\label{sec:related_work}

% \paragraph{Correlation Objectives in Deep Learning}
% Correlation-based losses are widely used when evaluation emphasizes agreement in shape or calibration. In affective computing and emotion recognition, CCC is a standard metric and is often optimized directly as a loss \citep{atmaja2020evaluation,praveen2024recursive,zhang2022continuous}. Prior work consistently observes improved CCC compared to MSE objective, but also reports optimization instability and the frequent need for MSE warmup or careful learning rate schedules \citep{atmaja2020evaluation}, highlighting the difficulty of mini-batch correlation training.

% In multi-view representation learning, deep Canonical Correlation Analysis (CCA) maximizes cross-view correlation, but relies on accurate covariance estimates and is known to require large batches or additional regularization to avoid ill-conditioned covariance estimates \citep{andrew2013deepcca,sdgcca2023}. Since large batches can be infeasible and may degrade generalization \citep{goyal2017train,keskar2017largebatch}, stabilizing correlation optimization under small-batch setting remains an important practical problem.

\paragraph{Correlation Objectives in Deep Learning}
Correlation-based losses are common when evaluation emphasizes agreement in shape or calibration. In affective computing and emotion recognition, CCC is a standard metric and is often optimized directly, typically improving over MSE but also requiring MSE warmup or careful schedules for stability \citep{atmaja2020evaluation,praveen2024recursive,zhang2022continuous}. In multi-view representation learning, deep CCA similarly depends on reliable covariance estimates and often requires large batches or additional regularization \citep{andrew2013deepcca,sdgcca2023}. Since large batches can be impractical and may hurt generalization \citep{goyal2017train,keskar2017largebatch}, stable small-batch correlation optimization remains an important practical problem.

\paragraph{Optimization of Non-decomposable Objectives}
Most work on non-decomposable stochastic optimization focuses on classification metrics such as F-measure, AUC, and AUPRC. Beyond primal--dual and saddle-point methods for confusion-matrix ratios \citep{sanyal2018optimizing,puthiya2014optimizing,pmlr-v28-dembczynski13,gao2013one,yuan2021large}, recent work studies compositional/coupled formulations and other gradient-based surrogates, including SCGD for compositions of expectations \citep{wang2017stochastic}, finite-sum coupled compositional optimization for AP and related objectives \citep{wang2022finite}, AUPRC optimization via dependent compositional objectives \citep{qi2021soap}, and pAUC maximization via DRO-based methods \citep{zhu2022auc}. Incomplete U-statistics provide another $O(B)$ route for pairwise objectives \citep{clemenccon2016incomplete,sgdIncompleteU2015}. Our method is related in spirit but targets PCC/CCC regression, using global moment sketches and stop-gradient pseudo-losses to remain within a standard SGD pipeline.

% \paragraph{Optimization of Non-decomposable Objectives}

% Most non-decomposable optimization work targets classification metrics such as F-measure and AUC. Primal--dual and saddle-point methods optimize ratios of confusion-matrix terms \citep{sanyal2018optimizing}, with specialized approaches for F-measures \citep{puthiya2014optimizing,pmlr-v28-dembczynski13} and AUC \citep{gao2013one,yuan2021large}. While effective in some settings, these methods typically introduce auxiliary variables and can exhibit min--max instability. Incomplete U-statistics is another line of work that provides $O(B)$ stochastic optimization for pairwise objectives \citep{clemenccon2016incomplete,sgdIncompleteU2015}. Our method is inspired by these ideas but focuses on correlation metrics and remaining within a standard SGD paradigm. We adapt these tools to PCC/CCC regression by maintaining global moment sketches and using stop-gradient pseudo-losses, yielding stable $O(B)$ training within standard SGD.

\section{Correlation Objectives and B2P-Corr Decomposition}
\label{sec:prelim}

\subsection{Problem Setup and Global Moments}

We consider supervised regression with data $(x,y)\sim\mathcal{D}$, where $x\in\mathcal{X}$ and $y\in\mathbb{R}$ (straightforward to extend to vector outputs). A model $f_\theta:\mathcal{X}\to\mathbb{R}$ produces predictions $f_\theta(x)$ with parameters $\theta\in\mathbb{R}^d$.

We denote batch size with $B$ and the population moments of targets and predictions by
\begin{gather}
\mu_y = \E[y], \quad \mu_f = \E[f_\theta(x)], \quad \sigma_y^2 = \Var(y),\quad \sigma_f^2 = \Var(f_\theta(x)), \\
C = \Cov(y,f_\theta(x)) = \E\!\left[(y-\mu_y)(f_\theta(x)-\mu_f)\right].
\end{gather}
Throughout, $\sigma_y^2$ and $\mu_y$ are independent of $\theta$, while $(\mu_f,\sigma_f^2,C)$ depend on $\theta$ through $f_\theta$.

The population PCC and CCC between $f_\theta(x)$ and $y$ can be expressed as
\begin{equation}
\rho(\theta) =\frac{\Cov(y,f_\theta(x))}{\sigma_y \sigma_f} =\frac{C}{\sigma_y \sigma_f}, \,\,\, \mathrm{CCC}(\theta) = \frac{2C}{\sigma_y^2 + \sigma_f^2 + (\mu_y - \mu_f)^2} = \frac{2C}{D},
\end{equation}

where $D:=\sigma_y^2+\sigma_f^2+(\mu_y-\mu_f)^2$. Both objectives are \emph{non-decomposable}: they depend on global moments $(\mu_f,\sigma_f^2,C)$, which are unknown and must be estimated from data.

\paragraph{Stop-gradient Operator.}
We use $\operatorname{sg}(\cdot)$ to denote a stop-gradient operator: the value of $\operatorname{sg}(u)$ is equal to $u$ in the forward pass, but $\partial\operatorname{sg}(u)/\partial u\! =\! 0$ in backprop. In autodiff frameworks (\eg, PyTorch \citep{paszke2019pytorch}), this corresponds to \texttt{detach} or \texttt{stop\_gradient}. This will let us treat moment sketches as constants when differentiating pseudo-losses.

\subsection{Population Gradient Decompositions}
\label{sec:grad-identities}

Although PCC/CCC are non-decomposable as scalar objectives, we show their \emph{population gradients} admit a clean per-example decomposition where only a few global moments appear as coefficients. These identities are the basis of the B2P-Corr framework.

\begin{lemma}[Population gradient of PCC]
\label{lem:pcc-grad}
Let $\rho(\theta) = C/(\sigma_y \sigma_f)$ with moments defined above. Then
\begin{equation*}
\label{eq:pcc-grad}
\nabla_\theta \rho
\! =\! \E\!\Bigg[
\Bigg(
\frac{y-\mu_y}{\sigma_y \sigma_f}
-
\rho\,\frac{f_\theta(x)-\mu_f}{\sigma_f^2}
\Bigg)
\nabla_\theta f_\theta(x)
\Bigg].
\end{equation*}
\end{lemma}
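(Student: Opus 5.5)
The plan is a direct quotient-rule computation. The only ingredients are the gradients of the two $\theta$-dependent moments $C$ and $\sigma_f$, each written as an expectation against $\nabla_\theta f_\theta(x)$. Throughout, assume enough regularity (e.g., $f_\theta$ differentiable in $\theta$ with an integrable dominating bound on $\nabla_\theta f_\theta(x)$ and on $f_\theta(x)\nabla_\theta f_\theta(x)$, and $\sigma_f>0$) to interchange $\nabla_\theta$ and $\E$ by dominated convergence. Since $\mu_y,\sigma_y$ do not depend on $\theta$, only $C$ and $\sigma_f$ need differentiating.

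\emph{Step 1: gradient of the covariance.} Write $C=\E[(y-\mu_y)f_\theta(x)]$. This form is valid because $\E[(y-\mu_y)\mu_f]=0$, and it avoids differentiating $\mu_f$ inside the product. Differentiating under the expectation gives
\[
\nabla_\theta C=\E\big[(y-\mu_y)\nabla_\theta f_\theta(x)\big].
\]

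\emph{Step 2: gradient of the variance.} Write $\sigma_f^2=\E[f_\theta(x)^2]-\mu_f^2$. Then
\[
\nabla_\theta\sigma_f^2=2\E[f_\theta(x)\nabla_\theta f_\theta(x)]-2\mu_f\E[\nabla_\theta f_\theta(x)]=2\E\big[(f_\theta(x)-\mu_f)\nabla_\theta f_\theta(x)\big].
\]
Hence $\nabla_\theta\sigma_f=\E[(f_\theta(x)-\mu_f)\nabla_\theta f_\theta(x)]/\sigma_f$. The point worth flagging is the centering trick: the terms arising from $\nabla_\theta\mu_f$ cancel exactly. This is why the per-example weights depend only on global moments as constants. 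It is the one step where care is needed, and I will verify it explicitly rather than absorb it.

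\emph{Step 3: combine.} Apply the quotient rule to $\rho=C/(\sigma_y\sigma_f)$:
\[
\nabla_\theta\rho=\frac{\nabla_\theta C}{\sigma_y\sigma_f}-\frac{C}{\sigma_y\sigma_f^2}\nabla_\theta\sigma_f
=\frac{\nabla_\theta C}{\sigma_y\sigma_f}-\frac{\rho}{\sigma_f}\nabla_\theta\sigma_f.
\]
Substituting Steps 1 and 2 gives $\frac{\rho}{\sigma_f}\nabla_\theta\sigma_f=\rho\,\E[(f_\theta(x)-\mu_f)\nabla_\theta f_\theta(x)]/\sigma_f^2$. Merging the two expectations by linearity then yields exactly the claimed integrand.

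The main obstacle is not algebraic; it is justifying the interchange of derivative and expectation, together with $\sigma_f>0$ so that $\rho$ is differentiable. I would state these as standing regularity assumptions, with a local integrable envelope for $\nabla_\theta f_\theta$ and for $f_\theta\nabla_\theta f_\theta$, rather than prove them.
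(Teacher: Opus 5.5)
Your proposal is correct and follows essentially the same route as the paper's proof: rewrite $C=\E[(y-\mu_y)f_\theta(x)]$, differentiate $\sigma_f^2=\E[f_\theta(x)^2]-\mu_f^2$ to get $2\E[(f_\theta(x)-\mu_f)\nabla_\theta f_\theta(x)]$, pass to $\nabla_\theta\sigma_f$ by the chain rule, and finish with the quotient rule. One small point: your example integrable envelope should also dominate $(y-\mu_y)\nabla_\theta f_\theta(x)$ for Step 1; the paper covers this by assuming the interchange $\nabla_\theta\E[yf]=\E[y\nabla_\theta f]$ in Assumption~\ref{assump:standing}(1).
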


\begin{lemma}[Population gradient of CCC]
\label{lem:ccc-grad}
Recall $\mathrm{CCC}(\theta)=2C/D$ with $C$ and $D$ defined above, then
\begin{equation*}
\label{eq:ccc-grad}
\nabla_\theta \mathrm{CCC}\!
=\!
\E\Bigg[\!
\frac{2}{D^2}\!\Big(
D (y-\mu_y) - 2C (f_\theta(x)-\mu_y)
\Big)\nabla_\theta f_\theta(x)\!
\Bigg].
\end{equation*}
\end{lemma}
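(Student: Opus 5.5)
The plan is to apply the quotient rule to $\mathrm{CCC}=2C/D$, after first writing $\nabla_\theta C$ and $\nabla_\theta D$ as per-example expectations, exactly as for \Cref{lem:pcc-grad}. Throughout, assume enough regularity (e.g.\ $f_\theta$ differentiable with an integrable dominating bound on $\nabla_\theta f_\theta(x)$, and $D>0$) to interchange $\nabla_\theta$ and $\E$.

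\textbf{Step 1: gradient of $C$.} Write $C=\E[(y-\mu_y)f_\theta(x)]-\mu_f\,\E[y-\mu_y]$. The second term vanishes identically because $\E[y-\mu_y]=0$, so the centering of $f$ plays no role. Since $\mu_y$ does not depend on $\theta$, we get
\[
\nabla_\theta C=\E\big[(y-\mu_y)\nabla_\theta f_\theta(x)\big].
\]

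\textbf{Step 2: gradient of $D$.} Expand $D=\sigma_y^2+\E[f_\theta^2]-\mu_f^2+(\mu_y-\mu_f)^2$. This simplifies to
\[
D=\sigma_y^2+\E[f_\theta^2]-2\mu_y\mu_f+\mu_y^2=\sigma_y^2+\E[(f_\theta(x)-\mu_y)^2].
\]
Differentiating under the expectation then gives
\[
\nabla_\theta D=2\,\E\big[(f_\theta(x)-\mu_y)\nabla_\theta f_\theta(x)\big].
\]
This simplification is the one nonroutine point: the $\mu_f$-dependent terms cancel, which is why $\mu_y$, not $\mu_f$, appears in the stated formula. If we differentiated $\sigma_f^2$ and $(\mu_y-\mu_f)^2$ separately, we would get $2\E[(f-\mu_f)\nabla f]-2(\mu_y-\mu_f)\E[\nabla f]$, which combines to the same expression.

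\textbf{Step 3: quotient rule.} We have $\nabla_\theta(2C/D)=\frac{2}{D^2}\big(D\nabla_\theta C-C\nabla_\theta D\big)$. Substituting Steps 1 and 2 gives
\[
\frac{2}{D^2}\,\E\big[\big(D(y-\mu_y)-2C(f_\theta(x)-\mu_y)\big)\nabla_\theta f_\theta(x)\big].
\]
Here $D$ and $C$ are deterministic scalars, so they can be pulled inside the expectation. This is exactly the claimed identity.

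The main obstacle is only the regularity needed to justify differentiating under $\E$. I will state it as a standing assumption rather than prove it. The algebraic identity in Step 2 is the only step requiring care.
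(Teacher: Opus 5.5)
Your proposal is correct and is essentially the paper's proof. The paper likewise uses $\nabla_\theta C=\E[(y-\mu_y)\nabla_\theta f]$ (via $\E[y-\mu_y]=0$), derives $\nabla_\theta D=2\E[(f-\mu_y)\nabla_\theta f]$, and applies the quotient rule, taking the interchange of $\nabla_\theta$ and $\E$ as Assumption~\ref{assump:standing}(1) just as you propose. The only differences are cosmetic: the paper gets $\nabla_\theta D$ by differentiating $\sigma_f^2$ and $(\mu_y-\mu_f)^2$ separately and regrouping (your side remark) instead of first collapsing to $D=\sigma_y^2+\E[(f-\mu_y)^2]$. Also, the interchange you actually need is for the integrands $y\nabla_\theta f$ and $f\nabla_\theta f$, so a dominating bound on $\nabla_\theta f$ alone is not quite the right regularity example.
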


Proofs are shown in \Cref{sec:appendix-proof-pcc,sec:appendix-proof-ccc}. Lemma~\ref{lem:pcc-grad} and~\ref{lem:ccc-grad} show that, if we can provide accurate estimates of the global moments,
then correlation learning reduces to optimizing an expectation of \emph{per-example} weighted gradients.

\subsection{B2P Decomposition via Stop-gradient Pseudo-losses}
\label{sec:pseudo-loss}

For scalar predictions $f=f_\theta(x)$ and target $y$, consider the pseudo-losses
\begin{align}
\mathcal{L}_\rho(f;y)\!
=\! -&\frac{y-\operatorname{sg}(\mu_y)}{\operatorname{sg}(\sigma_y)\operatorname{sg}(\sigma_f)} f\!
+\! \frac{\operatorname{sg}(\rho)}{2\operatorname{sg}(\sigma_f)^2}\big(f\!-\!\operatorname{sg}(\mu_f)\big)^2,
\label{eq:pseudo-pcc}\\[4pt]
\mathcal{L}_{\mathrm{CCC}}(f;y)
=& -\frac{2}{\operatorname{sg}(D)^2}\Big(
\operatorname{sg}(D)(y-\operatorname{sg}(\mu_y))f
-\operatorname{sg}(C)\big(f-\operatorname{sg}(\mu_y)\big)^2
\Big).
\label{eq:pseudo-ccc}
\end{align}
Note that these pseudo-losses need not equal $-\rho$ or $-\mathrm{CCC}$, we only require its gradient to match the target population gradient when the moments are correct. Treating the moments inside $\operatorname{sg}(\cdot)$ as constants, the derivative w.r.t.\ $f$ gives
\begin{equation}
\frac{\partial \mathcal{L}_\rho}{\partial f}
= -\frac{y-\mu_y}{\sigma_y \sigma_f} + \rho\,\frac{f-\mu_f}{\sigma_f^2}, \quad \frac{\partial \mathcal{L}_{\mathrm{CCC}}}{\partial f}
= -\frac{2}{D^2}\Big(D(y-\mu_y)-2C(f-\mu_y)\Big).
\end{equation}
Therefore backpropagation yields $\E[\nabla_\theta \mathcal{L}_\rho]\!\!=\!\!-\nabla_\theta\rho$ (negative because we maximize PCC/CCC) and
$\E[\nabla_\theta \mathcal{L}_{\mathrm{CCC}}]\!=\!-\nabla_\theta\mathrm{CCC}$ provided we plug in the true moments. In standard offline setting, $(\mu_y,\sigma_y^2)$ can be precomputed from training labels. In online settings, we simply maintain a EMA sketch as $(\hat\mu_y,\widehat{\sigma_y^2})$ since these moments are $\theta$-independent and will not alter the gradient decompositions.

Now, pseudo-losses in \Cref{eq:pseudo-pcc,eq:pseudo-ccc} are decomposable given the moments, but in practice we do not know the true moments. In next section, we introduce our B2P-Corr which maintains lightweight ``moment sketches" with two $O(B)$ sketching strategies, B2P-EMA and B2P-U.

\begin{figure}[t]
    \centering
    \includegraphics[width=1\textwidth]{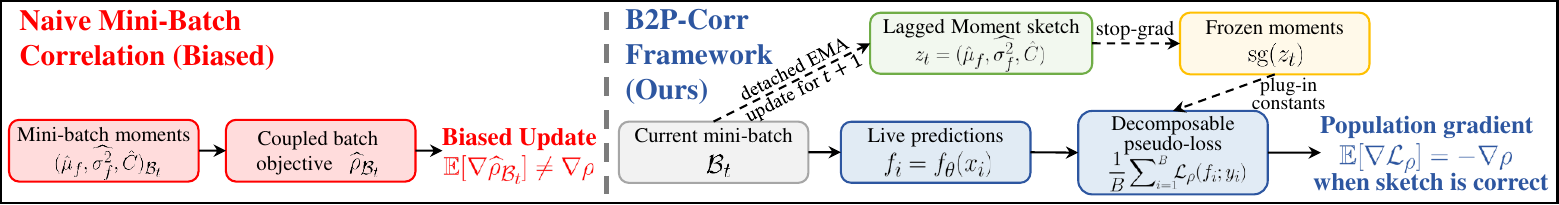}
    \caption{Batch-to-population mismatch in naive mini-batch and our proposed B2P-Corr fix.}
    \label{fig:teaser}
    \vspace{-12pt}
\end{figure}

\section{Method: B2P-Corr}
\label{sec:method}

We propose B2P-Corr, a simple drop-in framework for mini-batch optimization of PCC/CCC which
(i) use the decomposable pseudo-losses in \Cref{eq:pseudo-pcc,eq:pseudo-ccc}, and
(ii) maintain a small ``moment sketch" of the required global moments, treated as constants using $\operatorname{sg}(\cdot)$.
The remaining design choice is how to update the sketch from mini-batches.
We provide two practical instantiations:
\textbf{B2P-EMA} a running-moment sketch and \textbf{B2P-U} a variance-reduced replacement
for the covariance/variance estimates used inside B2P-EMA. Both share the same pseudo-loss.

\subsection{B2P-EMA: Exponential-moving Moment Sketch}
\label{sec:b2p-ema}

B2P-EMA maintains exponentially weighted average (EMA) estimates of the prediction moments ($\hat\mu_f$, $\widehat{\sigma_f^2}$, $\hat C$). On each batch
$\mathcal{B}=\{(x_i,y_i)\}_{i=1}^B$, compute predictions $f_i=f_\theta(x_i)$ and define
$\tilde f_i=\operatorname{sg}(f_i)$ for moment updates. At iteration $t$, B2P-EMA computes
\begin{gather}
\overline f_{\mathcal{B}} \!=\! \tfrac{1}{B}\sum_i \tilde f_i, \,\,\, \mathrm{var}_{\mathcal{B}}(f) \!=\! \tfrac{1}{B-1}\sum_i(\tilde f_i-\overline f_{\mathcal{B}})^2, \,\,\, \mathrm{cov}_{\mathcal{B}}(y,f) \!=\! \tfrac{1}{B-1}\sum_i (y_i-\mu_y)(\tilde f_i-\overline f_{\mathcal{B}}),
\end{gather}
and updates
\begin{gather}
\hat\mu_f \!\leftarrow\! (1-\alpha_t)\hat\mu_f\!+\!\alpha_t\overline f_{\mathcal{B}}, \,\,\,
\widehat{\sigma_f^2} \!\leftarrow\! (1\!-\!\alpha_t)\widehat{\sigma_f^2}\!+\!\alpha_t\mathrm{var}_{\mathcal{B}}(f),\,\,\,
\hat C \!\leftarrow\! (1-\alpha_t)\hat C+\alpha_t\,\mathrm{cov}_{\mathcal{B}}(y,f), \label{eq:ema-updates}
\end{gather}

then form derived quantities (with $\varepsilon,\delta>0$ for stability)
\begin{gather}
\hat\sigma_f=\sqrt{\widehat{\sigma_f^2}+\varepsilon},\,\,\,\,
\hat\rho=\mathrm{clip}\!\left(\frac{\hat C}{\sigma_y\hat\sigma_f},-1+\delta,1-\delta\right)\!,\,\,\,
\hat D=\sigma_y^2+\widehat{\sigma_f^2}+(\mu_y-\hat\mu_f)^2.
\end{gather}
Finally, optimize the decomposable batch objective $\frac{1}{B}\sum_i \mathcal{L}_\rho(f_i;y_i)$
or $\frac{1}{B}\sum_i \mathcal{L}_{\mathrm{CCC}}(f_i;y_i)$ where all estimated moments
appear under $\operatorname{sg}(\cdot)$. \Cref{alg:b2p-ema} summarizes one step and the extra cost over naive mini-batch training is $O(B)$ update with negligible overhead.

\noindent\textbf{Remark.} Theorem~\ref{prop:ema-consistency} uses a diminishing $\alpha_t$ for a clean asymptotic tracking statement. In experiments we use a constant $\alpha$, which corresponds to keeping a small steady-state tracking error when the target moments drift slowly.

\subsection{B2P-U: Stabilized Plug-in Sketch}
\label{sec:b2p-u}

While B2P-EMA greatly stabilizes correlation optimization, $\mathrm{var}_{\mathcal{B}}(f)$ and $\mathrm{cov}_{\mathcal{B}}(y,f)$ still fluctuate under small $B$. B2P-U further stabilizes these moment estimates using cross-batch U-statistic identities together with a small reservoir-based correction.

\paragraph{Cross-batch U-statistic identities.}
For i.i.d.\ pairs $((x_1,y_1),(x_2,y_2))$ and predictions $f_i=f_\theta(x_i)$, the moments satisfy
\begin{gather}
\Cov(y,f) = \tfrac12\,\E\big[(y_1-y_2)(f_1-f_2)\big], \,\,\,\,\,\,
\Var(f)   = \tfrac12\,\E\big[(f_1-f_2)^2\big]. \label{eq:u-var}
\end{gather}
with proof in \Cref{sec:appendix-ustat}. For the second-order moments considered here, the complete cross-batch estimator over all $B^2$ pairs also admits an exact algebraically collapsed $O(B)$ form. In our implementation we instead use a sampled incomplete U-statistic with $K=O(B)$ cross-batch pairs, which preserves the same pairwise identity while fitting naturally with our cross-batch sketch update. To reduce coupling between the sketch (in denominators) and the current per-example gradients, we form pairs across consecutive batches: at step $t$, sample cross-batch pairs $(y_i,\tilde f_i)$ and $(y'_j,\tilde f'_j)$ with $i\!\in\!\mathcal{B}_t$, $j\!\in\!\mathcal{B}_{t-1}$ and compute
\begin{gather}
\hat C_{\mathrm{U}}=\frac{1}{2K}\sum_{(i,j)} (y_i-y'_j)(\tilde f_i-\tilde f'_j),
\,\,\,\,\,\,\,
\widehat{\sigma^2_{f,\mathrm{U}}}=\frac{1}{2K}\sum_{(i,j)} (\tilde f_i-\tilde f'_j)^2.
\end{gather}

\paragraph{Reservoir-based stabilization.}
We maintain a small reservoir $\mathcal{R}$ (size $m\ll|\mathcal{D}|$) of cached $\tilde f$ and labels $y$ (refreshed periodically). For any moment $M\in\{C,\sigma_f^2\}$, we combine the sampled U-statistic estimate $\hat M_{\mathrm{U}}$ with a reservoir estimate $\hat M_{\mathcal{R}}$ via
\[
\hat M^\star=\hat M_{\mathrm{U}}+\lambda\big(\hat M_{\mathcal{R}}-\widehat{\E}[\hat M_{\mathcal{R}}]\big),
\]
where $\widehat{\E}[\hat M_{\mathcal{R}}]$ is a running mean of $\hat M_{\mathcal{R}}$ tracked by an EMA. In the classical idealized control-variate setting, the optimal choice is
\begin{equation}
\label{eq:lambda-opt}
\lambda^\star = -\frac{\Cov(\hat M_{\mathrm{U}},\hat M_{\mathcal{R}})}{\Var(\hat M_{\mathcal{R}})}.
\end{equation}
In our implementation, however, the reservoir is disjoint and periodically refreshed, so the EMA-estimated coefficient should be viewed as a practical stabilization heuristic rather than a guarantee of stepwise conditional variance reduction at fixed $\theta_t$; it can also reflect temporal correlation induced by slow parameter drift. We then update $(\hat C, \widehat{\sigma_f^2})$ in \Cref{eq:ema-updates} using $\hat C^\star$ and $\widehat{\sigma_f^2}{}^\star$ in place of the raw mini-batch estimators. \Cref{alg:b2p-u} details the B2P-U subroutine, which preserves $O(B)$ complexity in our implementation.

\begin{algorithm}[t]
\caption{B2P-EMA training step for PCC/CCC}
\label{alg:b2p-ema}
\begin{algorithmic}[1]
\State \textbf{Input:} mini-batch $\mathcal{B}=\{(x_i,y_i)\}_{i=1}^{B\geq2}$; moment sketch ($\hat\mu_f,\widehat{\sigma_f^2},\hat C$); label moments $(\mu_y,\sigma_y)$; stability $\varepsilon,\delta>0$; EMA rate $\alpha_t\in(0,1]$
\State $f_i \gets f_\theta(x_i)$ for all $i\in\mathcal{B}$; \; $\tilde f_i \gets \operatorname{sg}(f_i)$
\State $\overline{f}_\mathcal{B} \!\gets\! \frac{1}{B}\sum_{i=1}^B \tilde f_i$; $\mathrm{var}_\mathcal{B}(f) \!\gets\! \frac{1}{B-1}\sum_{i=1}^B (\tilde f_i-\overline f_{\mathcal B})^2$;
$\mathrm{cov}_\mathcal{B}(y,f) \!\gets\! \frac{1}{B-1}\sum_{i=1}^B (y_i-\mu_y)(\tilde f_i-\overline{f}_\mathcal{B})$
\State $\hat\mu_f \!\gets\! (1-\alpha_t)\hat\mu_f + \alpha_t\overline{f}_\mathcal{B}$; $\widehat{\sigma_f^2} \!\gets\! (1-\alpha_t)\widehat{\sigma_f^2} + \alpha_t \mathrm{var}_\mathcal{B}(f)$; $\hat C \gets (1-\alpha_t)\hat C + \alpha_t\,\mathrm{cov}_\mathcal{B}(y,f)$
\State $\hat\sigma_f \gets \sqrt{\widehat{\sigma_f^2}+\varepsilon}$; \; $\hat\rho \gets \mathrm{clip}\big(\hat C/(\sigma_y\hat\sigma_f), -1+\delta, 1-\delta\big)$; $\hat D \gets \sigma_y^2+\widehat{\sigma_f^2}+(\mu_y-\hat\mu_f)^2$ \Comment{CCC}
\State $\mathcal{L}\gets \frac{1}{B}\sum_{i=1}^B \mathcal{L}(f_i;y_i)$ using \Cref{eq:pseudo-pcc} (use \Cref{eq:pseudo-ccc} if $\text{loss = CCC}$)
\State Compute $\nabla_\theta \mathcal{L}$ and update $\theta$ with an optimizer step
\end{algorithmic}
\end{algorithm}

\section{Theoretical Study}
\label{sec:theory}

We analyze three aspects of correlation optimization:
(i) the bias of naive mini-batch PCC/CCC gradients;
(ii) moving-target tracking of the B2P-EMA sketch and the induced plug-in gradient; and
(iii) an idealized control-variate analysis that motivates the reservoir stabilization used in B2P-U.

% \subsection{Standing Assumptions}
% \label{sec:theory-assumptions}

\begin{assumption}[Differentiability, Integrability, and Non-degeneracy]
\label{assump:standing}
Fix $\theta$ and write $f\!:=\!f_\theta(x)$. Define the raw-moment vector
$s\!:=\! (f, f^2, y f) \in \mathbb{R}^3$, and its Jacobian $\nabla_\theta s \in \mathbb{R}^{3\times d}$. Let $\|\cdot\|$ denote the Euclidean norm on $\mathbb{R}^3$ and $\|\cdot\|_F$ the Frobenius norm on $\mathbb{R}^{3\times d}$. Assume:

\begin{enumerate}[label=(\arabic*), leftmargin=1.5em]
\item \textbf{Interchanging $\nabla_\theta$ and $\E$.}
$f_\theta(x)$ is a.s.\ differentiable in $\theta$ and for each coordinate of $s$ we may interchange differentiation and expectation, \eg,
$\nabla_\theta \E[f]=\E[\nabla_\theta f]$, $\nabla_\theta \E[f^2]=\E[\nabla_\theta(f^2)]$, and $\nabla_\theta \E[yf]=\E[y\nabla_\theta f]$.

\item \textbf{Finite moments.}
  $\E[y^2]<\infty$,
  $\E[\|s\|^2] < \infty$,
  $\E[\|\nabla_\theta s\|_F^2] < \infty$,
  and 
  $\E[\|s\|^2\,\|\nabla_\theta s\|_F] < \infty$.

\item \textbf{Non-degeneracy.}
$\sigma_y\!>\!0$ and either
(i) $\sigma_f(\theta)\!\ge\! \sigma_{\min}\!>\!0$, or
(ii) we analyze $\varepsilon$-stabilized versions with $\sigma_f^2\leftarrow \sigma_f^2+\varepsilon$ and $D\leftarrow D+\varepsilon$.
\end{enumerate}
\end{assumption}
\setlength{\textfloatsep}{5pt plus 1pt minus 1pt}
\begin{algorithm}[t]
\caption{B2P-U sketch update plug-in for \Cref{alg:b2p-ema}}
\label{alg:b2p-u}
\begin{algorithmic}[1]
\State \textbf{Input:} current batch $(y_i,\tilde f_i)_{i=1}^B$ and previous batch $(y'_j,\tilde f'_j)_{j=1}^B$;
reservoir $\mathcal{R}$ (size $m$) storing cached pairs $(y,\tilde f)$; current sketch $(\hat C,\widehat{\sigma_f^2})$; EMA rate $\alpha_t$; number of pairs $K=O(B)$; $\tau>0$
\State Sample $K$ cross-batch pairs $(i,j)$ with $i,j\!\in\!\!\{1,\dots,B\}$
\State $\hat C_{\mathrm{U}} \gets \frac{1}{2K}\sum_{(i,j)} (y_i-y'_j)\,(\tilde f_i-\tilde f'_j)$;
$\,\,$ $\widehat{\sigma^2_{f,\mathrm{U}}} \gets \frac{1}{2K}\sum_{(i,j)} (\tilde f_i-\tilde f'_j)^2$
\State Update running means $\widehat{\E}[\hat C_{\mathcal{R}}]$, $\widehat{\E}[\widehat{\sigma^2_{f,\mathcal{R}}}]$ (e.g., via EMA)
\State Update running estimates of $\Var(\hat C_{\mathcal{R}})$, $\Cov(\hat C_{\mathrm{U}},\hat C_{\mathcal{R}})$ and
$\Var(\widehat{\sigma^2_{f,\mathcal{R}}})$, $\Cov(\widehat{\sigma^2_{f,\mathrm{U}}},\widehat{\sigma^2_{f,\mathcal{R}}})$
\State $\lambda_C \gets -\Cov(\hat C_{\mathrm{U}},\hat C_{\mathcal{R}})/\big(\Var(\hat C_{\mathcal{R}})+\tau\big)$; $\,\,$ $\lambda_V \gets -\Cov(\widehat{\sigma^2_{f,\mathrm{U}}},\widehat{\sigma^2_{f,\mathcal{R}}})/\big(\Var(\widehat{\sigma^2_{f,\mathcal{R}}})+\tau\big)$
\State $\hat C^\star \gets \hat C_{\mathrm{U}} + \lambda_C\big(\hat C_{\mathcal{R}}-\widehat{\E}[\hat C_{\mathcal{R}}]\big)$; $\,\,$ $\widehat{\sigma_f^2}{}^\star \gets \widehat{\sigma^2_{f,\mathrm{U}}} + \lambda_V\big(\widehat{\sigma^2_{f,\mathcal{R}}}-\widehat{\E}[\widehat{\sigma^2_{f,\mathcal{R}}}]\big)$
\State $\hat C \gets (1-\alpha_t)\hat C + \alpha_t\,\hat C^\star$; 
$\,\,$ $\widehat{\sigma_f^2} \gets (1-\alpha_t)\widehat{\sigma_f^2} + \alpha_t\,\widehat{\sigma_f^2}{}^\star$ \Comment{Reduced-variance sketch}
\State Proceed with lines 5--7 in \Cref{alg:b2p-ema}
\end{algorithmic}
\end{algorithm}

\subsection{Bias of Naive Mini-batch PCC/CCC Gradients}
\label{sec:theory-bias}

Let $\widehat{\rho}_{\mathcal{B}}$ and $\widehat{\mathrm{CCC}}_{\mathcal{B}}$ denote PCC and CCC computed from a random i.i.d.\ mini-batch
$\mathcal{B}$ of size $B\ge 2$, where the moments inside the ratio are computed from the same batch and we differentiate through those batch moments. This is the standard ``naive'' mini-batch training strategy.

\begin{proposition}[Naive Mini-batch Gradients Have $O(1/B)$ Bias]
\label{prop:bias}
Under Assumption~\ref{assump:standing}, for i.i.d.\ mini-batches of size $B$ we have
\begin{align}
\E_{\mathcal{B}}\big[\nabla_\theta \widehat{\rho}_{\mathcal{B}}\big]
= \nabla_\theta \rho + O(B^{-1}),\,\,\,\,\,
\E_{\mathcal{B}}\big[\nabla_\theta \widehat{\mathrm{CCC}}_{\mathcal{B}}\big]
= \nabla_\theta \mathrm{CCC} + O(B^{-1}),
\end{align}
where the $O(B^{-1})$ term is generally nonzero due to coupling between random batch moments
(appearing in denominators) and per-example gradient terms.
A full proof is given in Appendix~\ref{sec:appendix-proof-bias}. An empirical verification with synthetic dataset is given in~\Cref{sssec:synth_results}.
\end{proposition}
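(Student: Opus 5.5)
We write both batch objectives as smooth functions of empirical raw moments and run a delta-method expansion on the gradient. For a batch $\mathcal{B}$, let $\bar s_{\mathcal B}=\frac1B\sum_i s_i\in\mathbb{R}^3$ with $s_i=(f_i,f_i^2,y_if_i)$, and also average the label moments $(y_i,y_i^2)$. The batch moments are then smooth functions of these averages, up to the $B/(B-1)$ corrections, which are themselves $1+O(B^{-1})$:
\begin{itemize}
\item $\widehat{\sigma_f^2}=\bar{f^2}-\bar f^2$,
\item $\widehat C=\overline{yf}-\bar y\bar f$, and analogously for the label moments.
\end{itemize}
Hence $\widehat\rho_{\mathcal B}=g(\bar z_{\mathcal B})$ and $\widehat{\mathrm{CCC}}_{\mathcal B}=h(\bar z_{\mathcal B})$ for fixed smooth maps $g,h$ of the averaged vector $\bar z=(\bar y,\bar{y^2},\bar s)$. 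By Assumption~\ref{assump:standing}(3), $g$ and $h$ are smooth in a neighbourhood of the population point $z^\ast=\E[z]$. Moreover $g(z^\ast)=\rho$ and $h(z^\ast)=\mathrm{CCC}$.

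By the chain rule, $\nabla_\theta\widehat\rho_{\mathcal B}=\nabla g(\bar z)^\top\frac1B\sum_i\nabla_\theta z_i$. The population gradient has the same form evaluated at the means: $\nabla_\theta\rho=\nabla g(z^\ast)^\top\E[\nabla_\theta z]$. This uses Assumption~\ref{assump:standing}(1) and agrees with Lemma~\ref{lem:pcc-grad}. Write $\bar J=\frac1B\sum_i\nabla_\theta z_i$ and $J^\ast=\E[\nabla_\theta z]$, and Taylor-expand $\nabla g$ around $z^\ast$:
\[
\nabla_\theta\widehat\rho_{\mathcal B}-\nabla_\theta\rho=\nabla g(z^\ast)^\top(\bar J-J^\ast)+\big[\nabla^2 g(z^\ast)(\bar z-z^\ast)\big]^\top\bar J+R_{\mathcal B}.
\]
The remainder satisfies $\|R_{\mathcal B}\|\lesssim\|\bar z-z^\ast\|^2\|\bar J\|$. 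The terms are handled as follows.
\begin{itemize}
\item The first term has mean zero.
\item In the second term, replace $\bar J$ by $J^\ast+(\bar J-J^\ast)$. The piece with $J^\ast$ has mean zero. The piece with $(\bar J-J^\ast)$ has mean $\frac1B\,\nabla^2g(z^\ast)\!:\!\Cov(z,\nabla_\theta z)$, because for i.i.d.\ samples $\E[(\bar z-z^\ast)\otimes(\bar J-J^\ast)]=\frac1B\Cov(z,\nabla_\theta z)$.
\item This cross-covariance term, together with $B/(B-1)$ factors, is the explicit $O(B^{-1})$ bias. It is generically nonzero, which shows the coupling claimed in the statement.
\end{itemize}
The finite-moment conditions in Assumption~\ref{assump:standing}(2), namely $\E\|s\|^2<\infty$ and $\E\|\nabla_\theta s\|_F^2<\infty$, make this covariance finite. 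The CCC case is identical, with $h$ replacing $g$ and $D$ in the denominator.

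The main obstacle is controlling $\E[R_{\mathcal B}]$ at order $O(B^{-1})$. A Taylor expansion is valid only on the event that $\bar z$ stays near $z^\ast$, where the denominators $\widehat\sigma_f$ and $\widehat D$ are bounded below. On that good event $\mathcal G$, the bound follows from $\E[\|\bar z-z^\ast\|^2\|\bar J\|]$. Expanding the sums leaves diagonal terms of the form $B^{-2}\E[\|s\|^2\|\nabla_\theta s\|_F]$ and off-diagonal terms of order $B^{-1}$. This is exactly where the condition $\E[\|s\|^2\|\nabla_\theta s\|_F]<\infty$ is used. Chebyshev gives $P(\mathcal G^c)=O(B^{-1})$.

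On $\mathcal G^c$ the denominators may degenerate. We would first invoke the $\varepsilon$-stabilized option (ii) of Assumption~\ref{assump:standing}(3), which makes $\nabla g$ globally bounded times $\|\bar J\|$. The complement contribution is then at most $P(\mathcal G^c)^{1/2}\,\E[\|\bar J\|^2]^{1/2}$ by Cauchy–Schwarz. If this only yields $O(B^{-1/2})$, we would sharpen it with a fourth-moment or Markov-type bound on $P(\mathcal G^c)$. Failing that, we would state the result as an asymptotic expansion in $B$, which is the interpretation of $O(B^{-1})$ intended in the statement.
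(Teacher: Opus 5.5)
Your core argument is the paper's own. You write the batch statistic as a smooth map of raw batch moments, apply the chain rule, and Taylor-expand the Jacobian $J(\hat r)$ around $r$. The leading term is unbiased, and the Hessian term has mean $B^{-1}$ times a contraction of $H(r)$ with the cross-covariance of $s$ and $\partial_k s$. The remainder is controlled by $\E[\|\delta r\|^2\,\|\partial_k\hat r\|]=O(B^{-1})$, using the same diagonal/off-diagonal expansion and the condition $\E[\|s\|^2\|\nabla_\theta s\|_F]<\infty$.

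What you add is the good-event split, and the paper never makes it. It takes $\|R\|\le C\|\delta r\|^2$ with $C$ from third derivatives near $r$, then applies that bound to every batch. So the localization problem you flag is real, and the paper leaves it open too.

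Your patch does not close it as written, for two reasons. First, $J$ is not globally bounded even under option (ii): $\partial_{r_1}\varphi_\rho$ contains $(r_3-\mu_y r_1)\,r_1/\bigl(\sigma_y(r_2-r_1^2+\varepsilon)^{3/2}\bigr)$, so $\|J(\hat r)\|$ grows quadratically in $\hat r$. What (ii) gives you is only that every denominator is bounded below on the range of $\hat r$, since $\hat r_2-\hat r_1^2=\frac1B\sum_i(f_i-\bar f)^2\ge0$. Second, pairing the uncentered $\|\bar J\|$ with $P(\mathcal G^c)^{1/2}$ wastes a factor $B^{-1/2}$, which is why you land at $O(B^{-1/2})$.

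The repair needs neither fourth moments nor a retreat to an asymptotic expansion: center before applying Cauchy--Schwarz. On $\mathcal G^c=\{\|\delta r\|>\eta\}$, bound $|J(\hat r)\partial_k\hat r|$, $|J(r)\partial_k\hat r|$ and $|\delta r^\top H(r)\partial_k\hat r|$ separately. Each is dominated by a sum of products of at most three factors of the form $|\hat r_a|$ or $\|\partial_k\hat r\|$. Bound each factor by its population value plus a centered batch average, and expand. Then:
\begin{itemize}
\item products with no centered factor contribute $O(P(\mathcal G^c))=O(B^{-1})$;
\item one centered factor $X$ gives $\E[|X|\mathbf{1}_{\mathcal G^c}]\le\|X\|_{L^2}P(\mathcal G^c)^{1/2}=O(B^{-1})$;
\item two centered factors give $\|X\|_{L^2}\|Y\|_{L^2}=O(B^{-1})$, without using the event at all;
\item three centered factors are bounded by $\E[\|\delta r\|^2\|\partial_k\hat r-\partial_k r\|]=O(B^{-1})$, which is the same computation you already do for the remainder.
\end{itemize}
Hence the finite-moment conditions of Assumption~\ref{assump:standing}(2) suffice, given option (ii). Your move to option (ii) is in fact forced rather than merely convenient. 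Under (i) alone, if $f_\theta(x)$ has an atom of mass $p>0$, the batch variance vanishes with probability at least $p^B$, and $\E_{\mathcal B}[\nabla_\theta\widehat\rho_{\mathcal B}]$ is undefined.

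Finally, averaging the label moments $(y_i,y_i^2)$ into $\bar z$ departs from the paper, which keeps $(\mu_y,\sigma_y)$ at their population values inside $\varphi_\rho$ and $\varphi_{\mathrm{CCC}}$. This departure is not free. Your covariance and remainder bounds then require fourth moments of $y$ (e.g.\ $\E[y^4]<\infty$ and $\E[y^4\|\nabla_\theta s\|_F]<\infty$), which Assumption~\ref{assump:standing}(2) does not provide. Also, option (ii) stabilizes $\sigma_f^2$ and $D$ but not a batch $\widehat\sigma_y$, so your PCC map stays singular on $\{\widehat\sigma_y=0\}$. Either adopt the paper's convention or add these assumptions.
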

This bias is the analytic expression of the B2P gap: naive mini-batch optimization directly follows the gradient of a \emph{random} local objective whose expectation does not equal the desired population objective, and the discrepancy decreases only slowly as $B$ increases.

\subsection{Tracking of B2P-EMA}
\label{sec:theory-b2p}

In practice, the parameters and the sketch are updated jointly, so the key question is whether the EMA sketch tracks the \emph{current} population moments along training. Let $m(\theta):=(\mu_f(\theta),\sigma_f^2(\theta),C(\theta))$, $z_t:=(\hat\mu_f^{(t)},\widehat{\sigma_f^2}{}^{(t)},\hat C^{(t)})$, and let $\mathcal F_t$ denote all information available before drawing the batch at step $t$. This is the standard fast-sketch / slow-parameter regime in stochastic compositional optimization \citep{wang2017stochastic,wang2022finite}. We work under a standard tracking regime: the detached batch-moment estimator is conditionally unbiased with bounded second moment, the EMA steps satisfy standard diminishing-step conditions, the target moments drift slowly relative to $\alpha_t$, and the mean plug-in gradient is Lipschitz in the sketch. The formal assumptions are deferred to Appendix~\ref{app:ema-tracking-assumptions}.

\begin{theorem}[B2P-EMA tracks moving population moments]
\label{prop:ema-consistency}
Let $J(\theta)$ denote the negative PCC objective $-\rho(\theta)$ or the negative CCC objective $-\mathrm{CCC}(\theta)$, and let $g_t$ be the mini-batch gradient produced by the B2P-EMA pseudo-loss at step $t$. Under Assumption~\ref{assump:standing} and~\ref{assump:ema},
\begin{gather*}
\E\|z_t-m(\theta_t)\|^2 \to 0, \,\,\,\,
\E\big\|\E[g_t\mid \mathcal F_t]-\nabla J(\theta_t)\big\| \to 0.
\end{gather*}
\end{theorem}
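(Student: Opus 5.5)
The argument runs in two stages. First we prove mean-square tracking of the sketch by a Robbins--Siegmund style recursion on the error $e_t := z_t - m(\theta_t)$. Then we transfer sketch accuracy to gradient accuracy using the Lipschitz property of the mean plug-in gradient together with Lemma~\ref{lem:pcc-grad} and Lemma~\ref{lem:ccc-grad}.

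\textbf{Sketch tracking.} Write the EMA update \eqref{eq:ema-updates} as $z_{t+1} = (1-\alpha_t) z_t + \alpha_t \hat m_t$, where $\hat m_t$ is the detached batch-moment vector. Assumption~\ref{assump:ema} gives $\E[\hat m_t\mid\mathcal F_t] = m(\theta_t)$ and $\E[\|\hat m_t - m(\theta_t)\|^2\mid\mathcal F_t]\le \sigma^2$. Set $\xi_t := \hat m_t - m(\theta_t)$ and $\Delta_t := m(\theta_{t+1}) - m(\theta_t)$. Then
\begin{equation*}
e_{t+1} = (1-\alpha_t)e_t + \alpha_t \xi_t - \Delta_t .
\end{equation*}
Square, take conditional expectations, and use that $\xi_t$ is a martingale difference. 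This kills the cross term $\langle (1-\alpha_t)e_t, \xi_t\rangle$. For the term involving $\Delta_t$, Young's inequality gives
\begin{equation*}
2\langle (1-\alpha_t)e_t+\alpha_t\xi_t,\Delta_t\rangle \le \tfrac{\alpha_t}{2}\|e_t\|^2 + \tfrac{c}{\alpha_t}\|\Delta_t\|^2 + \alpha_t^2\|\xi_t\|^2 + \|\Delta_t\|^2 .
\end{equation*}
Combining, we obtain
\begin{equation*}
\E\|e_{t+1}\|^2 \le (1-\tfrac{\alpha_t}{2})\E\|e_t\|^2 + c_1\alpha_t^2\sigma^2 + c_2\,\E\|\Delta_t\|^2/\alpha_t .
\end{equation*}
The slow-drift condition should state that $\E\|\Delta_t\|^2/\alpha_t^2 \to 0$; for instance, $m$ is Lipschitz (by Assumption~\ref{assump:standing}(1)--(2)) with bounded step gradients and $\eta_t/\alpha_t\to 0$. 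With $\sum\alpha_t=\infty$ and $\alpha_t\to0$, the standard Chung-type lemma for sequences $u_{t+1}\le(1-a_t)u_t + a_t b_t$ with $b_t\to0$ then yields $\E\|e_t\|^2\to 0$.

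\textbf{Gradient bias.} The pseudo-loss gradient conditioned on $\mathcal F_t$ is not exactly the plug-in at $z_t$, since the sketch is updated with the current batch before the loss is formed. So we first write
\begin{equation*}
\E[g_t\mid\mathcal F_t] = \E[G(\theta_t, z_{t+1})\mid \mathcal F_t],
\end{equation*}
where $G(\theta,z) := \E_{(x,y)}[\partial_f\mathcal L(f_\theta(x);y,z)\nabla_\theta f_\theta(x)]$. The key observation is that the pseudo-loss is linear in the per-example term given the sketch, but the sketch $z_{t+1}$ depends on the same batch. We handle this by the Lipschitz bound
\begin{equation*}
\|G(\theta_t,z_{t+1}) - G(\theta_t,m(\theta_t))\| \le L\|z_{t+1}-m(\theta_t)\| ,
\end{equation*}
and then bound $\|z_{t+1}-m(\theta_t)\| \le (1-\alpha_t)\|e_t\| + \alpha_t\|\xi_t\|$. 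For the residual coupling between the batch and the sketch, we use Cauchy--Schwarz with the finite-moment conditions of Assumption~\ref{assump:standing}(2). That coupling is $O(\alpha_t)$, so it vanishes. Finally, Lemma~\ref{lem:pcc-grad} and Lemma~\ref{lem:ccc-grad}, as already noted after \eqref{eq:pseudo-pcc}--\eqref{eq:pseudo-ccc}, give $G(\theta,m(\theta)) = \nabla J(\theta)$. Taking expectations and applying Jensen, $\E\|\E[g_t\mid\mathcal F_t]-\nabla J(\theta_t)\| \le L\sqrt{\E\|e_t\|^2} + O(\alpha_t)\to 0$.

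\textbf{Main obstacle.} The clipping of $\hat\rho$ and the $\varepsilon$-stabilized denominators make $G$ only locally Lipschitz. We will use Assumption~\ref{assump:standing}(3) to keep the denominators bounded below uniformly, so that $G$ is globally Lipschitz on the relevant set; clipping is itself 1-Lipschitz. The other delicate step is the drift term: we must ensure the assumed rate $\E\|\Delta_t\|^2=o(\alpha_t^2)$ is what Assumption~\ref{assump:ema} provides. If it instead gives $\sum\|\Delta_t\|^2/\alpha_t<\infty$, the same recursion closes via Robbins--Siegmund.
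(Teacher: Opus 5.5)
Your sketch-tracking stage is essentially the paper's argument. It has the same error recursion $e_{t+1}=(1-\alpha_t)e_t+\alpha_t\xi-\Delta_t$, the same pathwise use of Young's inequality (needed because $\Delta_t$ is not $\mathcal F_t$-measurable), the resulting recursion $a_{t+1}\le(1-\alpha_t/2)a_t+O(\alpha_t\beta_t^2)+O(\alpha_t^2)$, and a Chung-type lemma. The drift condition you guessed at is implied by Assumption~\ref{assump:ema}(3), $\|m(\theta_{t+1})-m(\theta_t)\|\le\alpha_t\beta_t$ a.s.\ with $\beta_t\to0$, which gives $\E\|\Delta_t\|^2/\alpha_t\le\alpha_t\beta_t^2$.

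The gap is in the gradient stage. The paper takes the step-$t$ pseudo-loss to use the $\mathcal F_t$-measurable sketch $z_t$. Then $\E[g_t\mid\mathcal F_t]=\bar g(\theta_t,z_t)$ holds exactly, and Assumption~\ref{assump:ema}(4) plus Jensen finishes. You instead follow Algorithm~\ref{alg:b2p-ema} literally and use the post-update sketch $z_{t+1}$, which is built from the same batch. That observation is fair, and the paper does not address it, but it breaks your argument:
\begin{itemize}
\item Your displayed identity $\E[g_t\mid\mathcal F_t]=\E[G(\theta_t,z_{t+1})\mid\mathcal F_t]$ is false, because $z_{t+1}$ depends on the current batch.
\item The conclusion therefore rests entirely on the claim that the ``residual coupling'' is $O(\alpha_t)$, and the hypotheses do not give this.
\end{itemize}
Comparing with $z_t$ (whose plug-in has conditional mean exactly $\bar g(\theta_t,z_t)$), the residual is $\E\bigl[\tfrac1B\sum_i\bigl(w(y_i,f_i;z_{t+1})-w(y_i,f_i;z_t)\bigr)\nabla_\theta f_i\mid\mathcal F_t\bigr]$, where $w=\partial\mathcal L/\partial f$. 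Bounding it requires a \emph{pathwise} Lipschitz bound of the per-example weight in the sketch. That constant grows with $|y_i|$, $|f_i|$ and with the sketch itself: off the clipping region, $\partial_{\hat C}$ of $\hat\rho(f-\hat\mu_f)/\hat\sigma_f^2$ equals $(f-\hat\mu_f)/(\sigma_y\hat\sigma_f^3)$. It also multiplies $\alpha_t\|h_t-z_t\|$, which depends on the same sample.

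Cauchy--Schwarz then gives only $\alpha_t K_t$, where $K_t$ involves quantities like $\E[(1+y^2+f^2)\|\nabla_\theta f\|^2]$ at $\theta_t$ and the size of $z_t$. Assumption~\ref{assump:standing}(2) makes these finite at each fixed $\theta$ but not uniformly in $t$. Assumption~\ref{assump:ema}(4) controls only the averaged map $\bar g(\theta_t,\cdot)$, not the per-sample weight. So $\alpha_tK_t\to0$ is not established. There are two ways to close this:
\begin{itemize}
\item Adopt the paper's convention (loss formed with $z_t$). Your Lipschitz-plus-Jensen step is then exactly the paper's proof.
\item Add uniform hypotheses, e.g.\ $\sup_t\E[(1+y^2+f_{\theta_t}^2)\|\nabla_\theta f_{\theta_t}\|^2]<\infty$ and bounded sketches.
\end{itemize}

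Your ``main obstacle'' paragraph is unnecessary, since Lipschitzness of $\bar g(\theta_t,\cdot)$ is postulated in Assumption~\ref{assump:ema}(4). The derivation you propose would also fail: lower-bounded denominators and $1$-Lipschitz clipping do not give global Lipschitzness in $z$.
\begin{itemize}
\item For PCC, the $\hat C$-derivative of the mean gradient contains $\E[(f-\hat\mu_f)\nabla_\theta f]$, which grows with $|\hat\mu_f|$.
\item For CCC, the term $\hat C/\hat D^2$ has a $\widehat{\sigma_f^2}$-derivative growing with $|\hat C|$.
\end{itemize}
Likewise, Lipschitzness of $m$ does not follow from Assumption~\ref{assump:standing}(1)--(2), but you do not need it because the drift bound is assumed directly.
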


A sufficient condition for slow drift is that $m(\theta)$ is Lipschitz and $\|\theta_{t+1}-\theta_t\|=o(\alpha_t)$. Proof is given in Appendix~\ref{sec:appendix-proof-ema}. This is a tracking result for the sketch and induced plug-in gradient, not a full convergence theorem for the coupled nonconvex training dynamics.

\subsection{Idealized Control-Variate Analysis for B2P-U}
\label{sec:theory-var}

\begin{proposition}[Idealized control-variate analysis]
\label{prop:var-red}
Let $\hat M_{\text{U}}$ be an (incomplete) U-statistic estimate of a scalar moment $M$,
and let $\hat M_{\mathcal{R}}$ be another estimator with $\E[\hat M_{\mathcal{R}}]=M$ and $\Var(\hat M_{\mathcal{R}})>0$.
For any scalar $\lambda$, define
$
\hat M^\star = \hat M_{\text{U}} + \lambda\big(\hat M_{\mathcal{R}}-\E[\hat M_{\mathcal{R}}]\big).
$
Then $\hat M^\star$ is unbiased for $M$, and choosing
$\lambda^\star = -\Cov(\hat M_{\text{U}},\hat M_{\mathcal{R}})/\Var(\hat M_{\mathcal{R}})$ yields
\[
\Var(\hat M^\star)
= \Var(\hat M_{\text{U}}) - \frac{\Cov(\hat M_{\text{U}},\hat M_{\mathcal{R}})^2}{\Var(\hat M_{\mathcal{R}})}
\le \Var(\hat M_{\text{U}}),
\]
with strict inequality whenever $\Cov(\hat M_{\text{U}},\hat M_{\mathcal{R}})\neq 0$ (proof is provided in~\Cref{sec:appendix-proof-varred}).
\end{proposition}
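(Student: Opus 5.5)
The plan is the standard control-variate computation in two steps: first unbiasedness, then a one-dimensional quadratic minimization in $\lambda$.

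\emph{Unbiasedness.} Since $\lambda$ is a deterministic scalar, linearity of expectation gives
\[
\E[\hat M^\star]=\E[\hat M_{\text{U}}]+\lambda\big(\E[\hat M_{\mathcal{R}}]-\E[\hat M_{\mathcal{R}}]\big)=\E[\hat M_{\text{U}}].
\]
The correction term therefore has mean zero. We need $\E[\hat M_{\text{U}}]=M$, which holds by the U-statistic identities in \Cref{eq:u-var}. Each summand $\tfrac12(y_i-y'_j)(\tilde f_i-\tilde f'_j)$ or $\tfrac12(\tilde f_i-\tilde f'_j)^2$ is built from an independent pair drawn at fixed $\theta$, so it has expectation $C$ or $\sigma_f^2$. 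The incomplete U-statistic is an average of $K$ such terms, so it inherits unbiasedness by linearity; this holds whenever the pair sampling is independent of the values. I will state that this is the idealized setting: fixed $\theta$ and exactly known $\E[\hat M_{\mathcal{R}}]$.

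\emph{Variance.} Assuming finite second moments, which follows from \Cref{assump:standing}(2), bilinearity of variance gives
\[
\Var(\hat M^\star)=\Var(\hat M_{\text{U}})+2\lambda\,\Cov(\hat M_{\text{U}},\hat M_{\mathcal{R}})+\lambda^2\Var(\hat M_{\mathcal{R}}).
\]
Because $\Var(\hat M_{\mathcal{R}})>0$, this is a strictly convex quadratic in $\lambda$. Setting the derivative to zero gives the unique minimizer $\lambda^\star=-\Cov(\hat M_{\text{U}},\hat M_{\mathcal{R}})/\Var(\hat M_{\mathcal{R}})$. Substituting it back gives
\[
\Var(\hat M_{\text{U}})-2\frac{\Cov^2}{\Var}+\frac{\Cov^2}{\Var}=\Var(\hat M_{\text{U}})-\frac{\Cov(\hat M_{\text{U}},\hat M_{\mathcal{R}})^2}{\Var(\hat M_{\mathcal{R}})}.
\]
The subtracted term is nonnegative, which gives the inequality. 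It is strictly positive exactly when the covariance is nonzero, which gives the strict inequality. Alternatively, the value at $\lambda=0$ is $\Var(\hat M_{\text{U}})$, and the minimum of the quadratic is at most this.

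There is no real obstacle here. The only points needing care are the ones already flagged: $\lambda$ must be treated as a constant rather than estimated from the same data, the correction must be centered by its exact mean, and the finiteness of the second moments must be noted so that the covariances are well defined (by Cauchy--Schwarz).
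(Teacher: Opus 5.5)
Your proposal is correct and follows the paper's proof essentially step for step. You use linearity of expectation for unbiasedness, expand $\Var(\hat M^\star)$ as a convex quadratic in $\lambda$, minimize it at $\lambda^\star$, and substitute back; your added justification of $\E[\hat M_{\text{U}}]=M$ via the pairwise U-statistic identities, which the paper takes as given, is harmless and welcome.
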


\begin{remark}[Implementation]
Proposition~\ref{prop:var-red} is an idealized statement: it applies when the auxiliary estimator is correlated with $\hat M_{\text{U}}$ at the target parameter. In Algorithm~\ref{alg:b2p-u}, the reservoir is disjoint and periodically refreshed, so the classical conditional sampling covariance at fixed $\theta_t$ can be small and, under independent-sample idealizations, may vanish. Accordingly, the EMA-estimated coefficient in B2P-U should be interpreted as a practical stabilization mechanism---potentially reflecting temporal correlation induced by slowly drifting parameters---rather than a guarantee of stepwise conditional variance reduction. Empirically, B2P-U nonetheless exhibits lower gradient variability in \Cref{fig:exp1}.
\end{remark}

\section{Experiments}
\label{sec:experiment} 

% We first validate our theoretical analysis on two controlled synthetic datasets. In particular, we (i) empirically verify the $O(B^{-1})$ bias of naive mini-batch PCC/CCC gradients (Proposition~\ref{prop:bias}), and (ii) show that B2P-EMA and B2P-U produce gradients that better match the population correlation objective and enables stable learning under mini-batch setting. We then evaluate B2P-EMA/B2P-U as a drop-in replacement for naive mini-batch PCC/CCC objectives on three real-world datasets (\Cref{ssec:realworld}) and study computational overhead and sensitivity (\Cref{ssec:sensitivity_and_efficiency}).

We first use two synthetic experiments to verify the $O(B^{-1})$ bias of naive mini-batch PCC/CCC gradients and to compare gradient alignment/stability. We then evaluate B2P-EMA/B2P-U as drop-in replacements for naive mini-batch PCC/CCC optimization on three real-world benchmarks, followed by overhead and sensitivity studies.

\subsection{Synthetic Dataset for Theory Verification}
\label{ssec:synthetic}

We use two controlled regression problems with a linear predictor $f_\theta(x)=\theta^\top x$ to demonstrate the ``batch-to-population'' (B2P) gap for correlation objectives. Unless stated otherwise, we maximize PCC $\rho(\theta)$ defined in~\Cref{sec:prelim}. CCC diagnostics are reported in~\Cref{fig:exp1_c,fig:exp1_d} (CCC results for Exp.2 are provided in~\Cref{app:exp2_ccc}). Experiment details and metrics are deferred to~\Cref{app:experiment_settings}.

\subsubsection{Experiment Settings}
\label{sssec:synthetic_setting}
% We compare naive mini-batch correlation optimization (using batch moments) with proposed B2P-EMA and B2P-U on synthetic dataset using the following settings.

\textbf{Exp.~1: Imbalanced Mixture Regression.} We construct an imbalanced two-component Gaussian mixture to make mini-batch moment estimates highly unstable under small batch-size $B$. We sample a component indicator $s\in\{+1,-1\}$ with $\mathbb{P}(s=+1)\gg \mathbb{P}(s=-1)$, draw $x\sim\mathcal{N}(s\mu,I)$, and set
$y={w^\star}^\top x+\Delta\,s+\varepsilon_s,$
where $\Delta$ is a mixture-component label shift and $\varepsilon_s$ is zero-mean heteroscedastic noise. This setting is challenging as the population moments are well-defined, but within-batch estimates of $\Var(f_\theta)$ and $\Cov(y,f_\theta)$ vary widely across batches.

\textbf{Metric.} At each training checkpoint with $\theta$, we compute an oracle population gradient $g_{\mathrm{pop}}(\theta)=\nabla_\theta \rho(\theta)$ using full batch, and compare it with a stochastic gradient estimate $g(\theta)$ from a mini-batch of size $B$. We report (i) gradient cosine \emph{alignment} $\cos(g(\theta),g_{\mathrm{pop}}(\theta))$, and (ii) \emph{RelBias}, the relative $\ell_2$ error $\|\E[g(\theta)]-g_{\mathrm{pop}}(\theta)\|_2/\|g_{\mathrm{pop}}(\theta)\|_2.$

\textbf{Exp.~2: Isotropic Linear Regression.} We sample $x\sim\mathcal{N}(0,I)$ and let
$
y={w^\star}^\top x+\varepsilon,$
with $\varepsilon\sim\mathcal{N}(0,\sigma^2)$ and $\|w^\star\|_2=1$. Since PCC is scale-invariant in $f_\theta$, the optimum is characterized by directional alignment between $\theta$ and $w^\star$ rather than the norm of $\theta$.

\textbf{Metric.} We train with varying batch sizes and evaluate (i) \emph{DirectionRecovery} $=\cos(\theta,w^\star)$ and (ii) \emph{TestPCC} = $\rho(\theta)$ on test set.

\subsubsection{Result Analysis}
\label{sssec:synth_results}

\Cref{fig:exp1_b,fig:exp1_d} empirically verifies Proposition~\ref{prop:bias}: the average relative bias of naive PCC/CCC gradients decreases roughly linearly with $1/B$. Beyond this bias law, \Cref{fig:exp1} shows that B2P-Corr yields much better-aligned stochastic gradients, especially at very small batch sizes. For $B{=}2$, naive PCC gradients are often NaN or nearly random, whereas B2P remains stable; for $B{=}4$, alignment improves from $0.16$ to $0.43$ and RelBias drops from $0.98$ to $0.08$. B2P-U matches B2P-EMA in mean alignment while reduces gradient estimation variance (different from variance in~\Cref{tab:real_results}). Empirically, B2P-U in~\Cref{fig:exp1_a} exhibits a narrower variance shade, with average standard deviations across batch sizes dropping from $0.0346$ (naive) and $0.0275$ (B2P-EMA) to $0.0183$ (B2P-U).

\Cref{fig:exp2} shows the same optimization effect in isotropic regression. With $B{=}2$, naive mini-batch PCC training fails, whereas both B2P variants recover the correct direction and substantially higher test PCC; the gap narrows as $B$ increases, consistent with improved concentration of batch moments.

\subsection{Real-world Regression Benchmarks}
\label{ssec:realworld}

We evaluate B2P-Corr on three real-world tasks where the state-of-the-art baselines \emph{directly optimize} PCC/CCC with mini-batches. We adopt a strict drop-in protocol: for each baseline, we keep the architecture, data splits, and auxiliary losses unchanged, and only replace the naive mini-batch correlation objective with our decomposable B2P pseudo-loss (B2P-EMA or B2P-U) and a fine-grained PCC loss alternative PLC~\citep{chen2022plc}. Dataset details and training protocols are provided in \Cref{app:experiment_settings}.

\subsubsection{Datasets}
\label{sssec:realworld_dataset}

\textbf{Video Quality Assessment (PCC).}
KoNViD-1k comprises 1,200 in-the-wild videos (960$\times$540) sampled from YFCC100M~\citep{thomee2016yfcc100m}.
Following KVQ~\citep{qu2025kvq}, we use Mean Opinion Scores (MOS) as targets and split 960/120/120 videos for train/val/test.
KVQ directly optimizes PCC and reports PCC, SROCC, and RMSE.

\textbf{Spatial Transcriptomic (PCC).}
The 10xProteomic dataset used by EGN \citep{yang2023egn} contains 32,032 pathology image patches paired with gene-expression vectors $\mathbf{y}_i\in\mathbb{R}_+^{|G|}$ for breast cancer. We follow its data preprocessing and the 3-fold evaluation. EGN jointly optimizes MSE+PCC during training and reports PCC, MAE, and MSE.

\textbf{Video-based Sentiment Analysis (CCC).}
VEATIC contains 124 videos (257,601 frames) with valence $y_i^{(v)}\in\mathbb{R}$ and arousal $y_i^{(a)}\in\mathbb{R}$ annotation. The baseline optimizes joint learning objective CCC+MSE and we report CCC, PCC, and RMSE.

\begin{figure*}[t]
  \centering
  \begin{subfigure}[t]{0.25\textwidth}
    \centering
    \includegraphics[width=\linewidth]{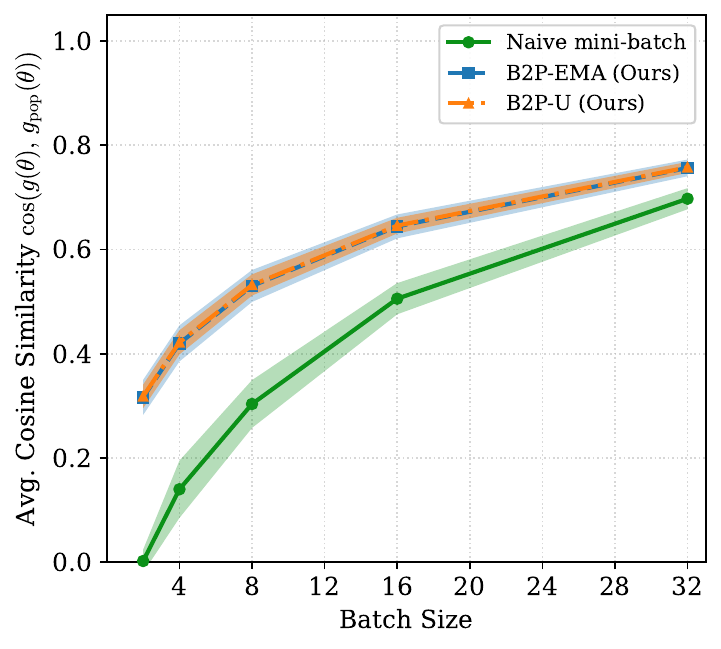}
    \caption{PCC: gradient align}
    \label{fig:exp1_a}
  \end{subfigure}\hfill
  \begin{subfigure}[t]{0.25\textwidth}
    \centering
    \includegraphics[width=\linewidth]{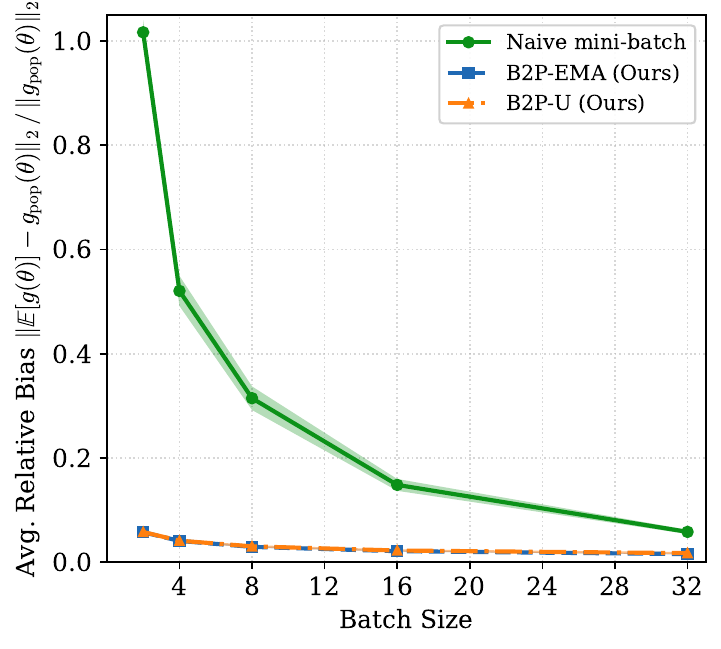}
    \caption{PCC: relative bias}
    \label{fig:exp1_b}
  \end{subfigure}\hfill
  \begin{subfigure}[t]{0.25\textwidth}
    \centering
    \includegraphics[width=\linewidth]{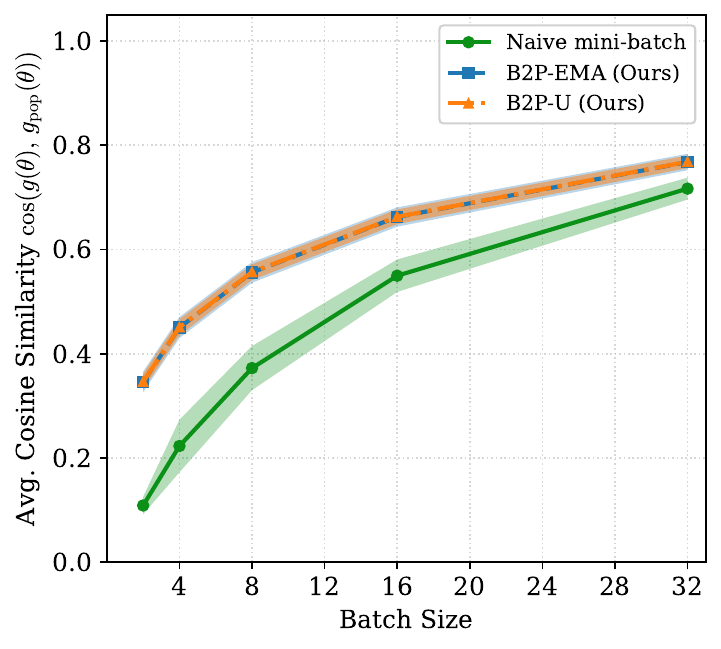}
    \caption{CCC: gradient align}
    \label{fig:exp1_c}
  \end{subfigure}\hfill
  \begin{subfigure}[t]{0.25\textwidth}
    \centering
    \includegraphics[width=\linewidth]{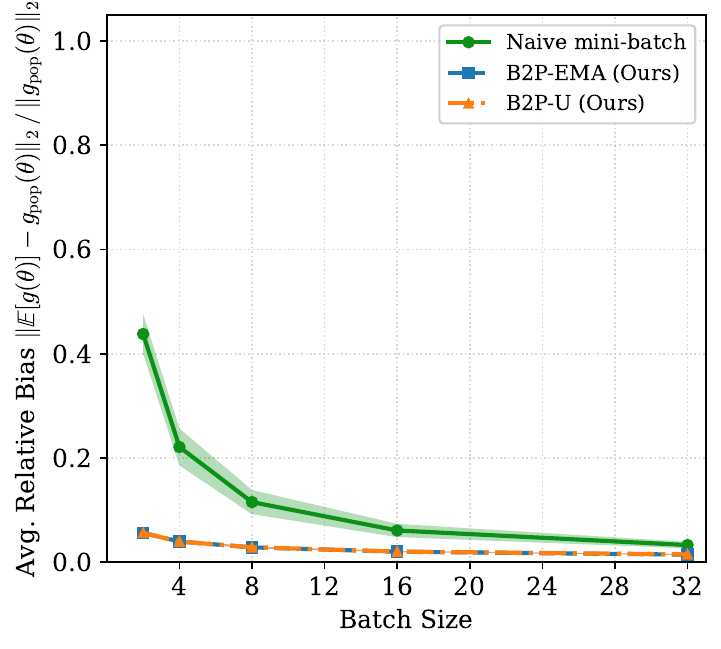}
    \caption{CCC: relative bias}
    \label{fig:exp1_d}
  \end{subfigure}

\caption{For Exp.~1, we compare stochastic gradients from naive mini-batch optimization and B2P-Corr against an oracle population gradient from full-batch training.
(a--d) Average gradient cosine alignment and relative bias at different batch size $B$ for PCC and CCC.}
\vspace{-13pt}

\label{fig:exp1}
\end{figure*}

\begin{figure*}[t]
  \centering
  \begin{subfigure}[t]{0.25\textwidth}
    \centering
    \includegraphics[width=\linewidth]{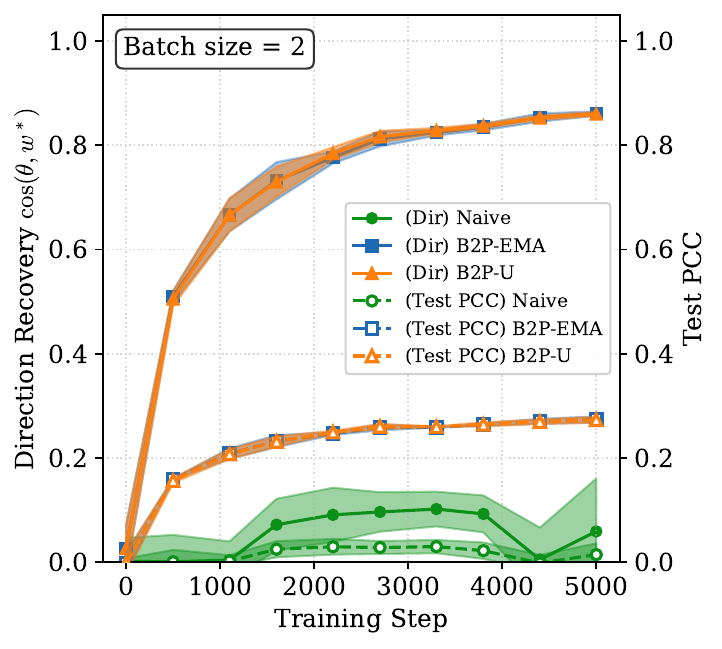}
    \caption{PCC: training ($B=2$)}
    \label{fig:exp2_a}
  \end{subfigure}\hfill
  \begin{subfigure}[t]{0.25\textwidth}
    \centering
    \includegraphics[width=\linewidth]{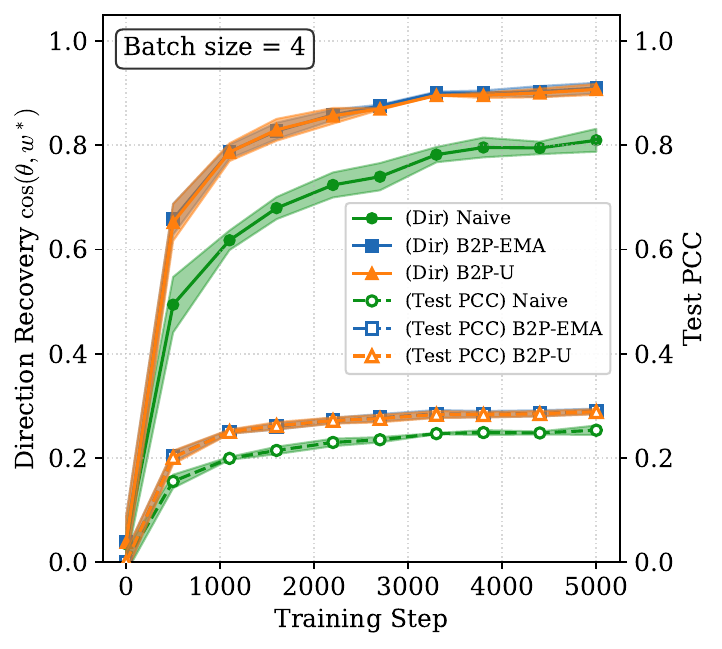}
    \caption{PCC: training ($B=4$)}
    \label{fig:exp2_b}
  \end{subfigure}\hfill
  \begin{subfigure}[t]{0.25\textwidth}
    \centering
    \includegraphics[width=\linewidth]{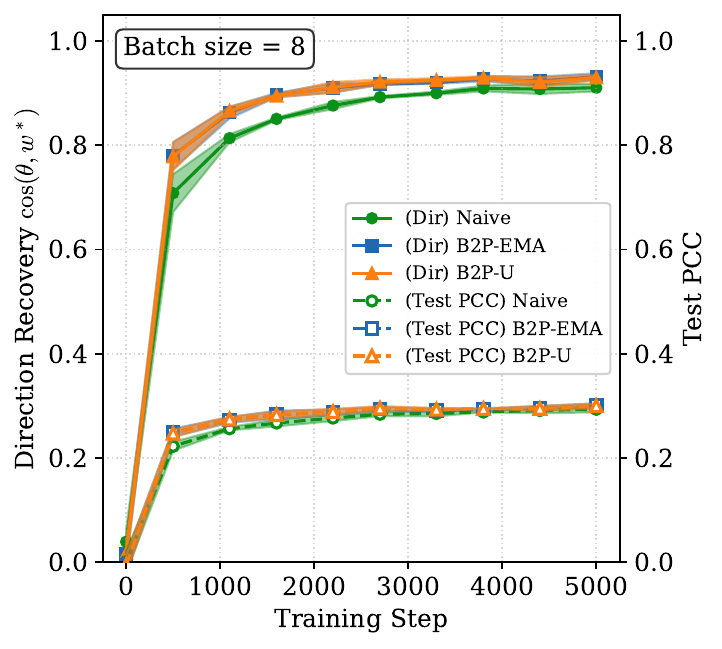}
    \caption{PCC: training ($B=8$)}
    \label{fig:exp2_c}
  \end{subfigure}\hfill
  \begin{subfigure}[t]{0.25\textwidth}
    \centering
    \includegraphics[width=\linewidth]{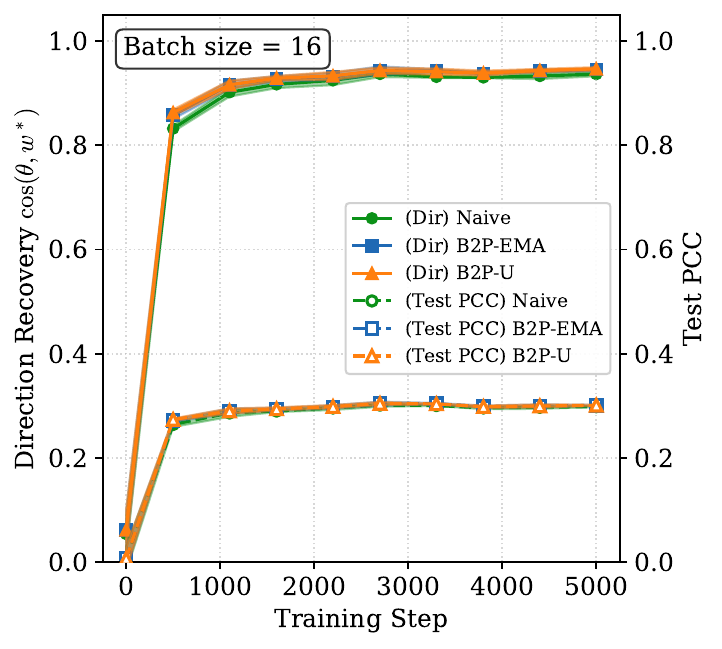}
    \caption{PCC: training ($B=16$)}
    \label{fig:exp2_d}
  \end{subfigure}
  \caption{For Exp.~2, we show DirectionRecovery and TestPCC vs. training steps for different batch sizes. With $B{=}2$, naive mini-batch fails to converge while B2P-EMA/B2P-U continue to improve. The gap narrows as $B$ increases. CCC training curves are shown in \Cref{app:exp2_ccc}.}
  \label{fig:exp2}
  % \vspace{-15pt}
\end{figure*}

% \begin{table}[!t]
% \centering
% \scriptsize
% \caption{\textbf{KoNViD-1k runtime and memory overhead.} We evaluate per-iteration runtime and VRAM usage for KVQ with different batch sizes. B2P-Corr adds negligible time and VRAM overhead, while improving test PCC relative to the naive mini-batch correlation objective.}
% \setlength{\tabcolsep}{1.5pt}
% \begin{tabular}{l|cc|cc|cc}
% \toprule
% \multicolumn{1}{l|}{\textbf{Batch-size}} &
% \multicolumn{2}{c|}{\textbf{2 (default)}} & 
% \multicolumn{2}{c|}{\textbf{4}} & 
% \multicolumn{2}{c}{\textbf{8}} \\
% \cmidrule(lr){1-1} \cmidrule(lr){2-3} \cmidrule(lr){4-5} \cmidrule(lr){6-7}
% \textbf{Overhead} 
% & \textbf{Sec/iter} & VRAM  
% & \textbf{Sec/iter} & VRAM
% & \textbf{Sec/iter} & VRAM \\
% \midrule
% KVQ & 3.168 & 25.102 GB & 5.730 & 44.914 GB & 11.061 & 76.216 GB \\
% +\textbf{B2P-EMA} & 3.169 & 25.102 GB & 5.734 & 44.914 GB & 11.093 & 76.216 GB \\
% +\textbf{B2P-U} & 3.177 & 25.102 GB & 5.764 & 44.914 GB & 11.165 & 76.216 GB \\
% \bottomrule
% \end{tabular}
% \vspace{-10px}
% \label{tab:overhead}
% \end{table}

\subsubsection{Result Analysis}
\label{sssec:realworld_results}

\Cref{tab:real_results} shows that replacing the naive mini-batch PCC/CCC term with B2P-EMA/B2P-U consistently improves correlation across all three tasks and batch sizes, while keeping architectures, optimizers, preprocessing, and auxiliary losses unchanged.

\textbf{Video Quality Assessment (KoNViD-1k).} In the unstable small-batch regime ($B{=}2$), B2P-EMA yields substantial gains, boosting PCC from $0.865$ to $0.895$ and reducing RMSE by $6.4\%$ ($0.303 \to 0.283$). This significantly outperforms the fine-grain correlation loss PLC ($0.871$), confirming that the primary bottleneck is the bias in local moment estimation rather than model capacity. As batch size increases, B2P-Corr variants maintain superiority, with B2P-U achieving the best overall performance at $B=8$ (PCC $0.906$, SROCC $0.893$, RMSE $0.264$). This aligns with our analysis that naive mini-batch correlation gradients couple per-example terms with denominators (Proposition~\ref{prop:bias}), and our B2P-U effectiveness reduces variance during moment estimation.

\begin{table*}[t]
\centering
\fontsize{6.1}{6.5}\selectfont 
\setlength{\tabcolsep}{1pt}
\caption{Comparison across three datasets. We replace naive mini-batch PCC/CCC objectives in each baseline with our B2P-Corr and it consistently improves performance. ``Default" denotes the original batch size used in each baseline. The VEATIC arousal prediction result are shown in \Cref{app:veatic_arousal}.}
\begin{tabular}{c|l|ccc|ccc|ccc}  % Changed first 'l' to 'c' for horizontal centering
\toprule
\multirow{2}{*}{\textbf{Dataset}} & \multicolumn{1}{c|}{\textbf{Batch-size}} &
\multicolumn{3}{c|}{\textbf{2 (Default)}} & 
\multicolumn{3}{c|}{\textbf{4}} & 
\multicolumn{3}{c}{\textbf{8}} \\
\cmidrule(lr){2-2} \cmidrule(lr){3-5} \cmidrule(lr){6-8} \cmidrule(lr){9-11}
& \multicolumn{1}{c|}{\textbf{Metrics}} 
& \textbf{PCC}~$\uparrow$ & \textbf{SROCC}~$\uparrow$ & \textbf{RMSE}~$\downarrow$
& \textbf{PCC}~$\uparrow$ & \textbf{SROCC}~$\uparrow$ & \textbf{RMSE}~$\downarrow$
& \textbf{PCC}~$\uparrow$ & \textbf{SROCC}~$\uparrow$ & \textbf{RMSE}~$\downarrow$ \\
\midrule
\multirow{4}{*}[-2pt]{\rotatebox[origin=c]{90}{\shortstack{KoNViD-1k\\(PCC)}}} & KVQ \citep{qu2025kvq} & 0.865$_{\scriptstyle\pm0.028}$& 0.861$_{\scriptstyle\pm0.023}$& 0.303$_{\scriptstyle\pm0.018}$& 0.878$_{\scriptstyle\pm0.029}$& 0.863$_{\scriptstyle\pm0.032}$& 0.299$_{\scriptstyle\pm0.017}$& 0.881$_{\scriptstyle\pm0.033}$& 0.875$_{\scriptstyle\pm0.026}$& 0.290$_{\scriptstyle\pm0.015}$\\
& \hspace{2pt}+PLC \citep{chen2022plc} & 0.871$_{\scriptstyle\pm0.033}$& 0.864$_{\scriptstyle\pm0.027}$& 0.301$_{\scriptstyle\pm0.016}$& 0.881$_{\scriptstyle\pm0.031}$& 0.853$_{\scriptstyle\pm0.021}$& 0.289$_{\scriptstyle\pm0.010}$& 0.887$_{\scriptstyle\pm0.028}$& 0.871$_{\scriptstyle\pm0.035}$& 0.293$_{\scriptstyle\pm0.015}$\\
\cmidrule(lr){2-11}
& \textbf{\hspace{2pt}+B2P-EMA} & \textbf{0.895}$_{\scriptstyle\pm0.024}$& \textbf{0.880}$_{\scriptstyle\pm0.030}$& \textbf{0.283}$_{\scriptstyle\pm0.011}$& \underline{0.893}$_{\scriptstyle\pm0.023}$& \textbf{0.884}$_{\scriptstyle\pm0.019}$& \underline{0.279}$_{\scriptstyle\pm0.017}$& \underline{0.895}$_{\scriptstyle\pm0.026}$& \underline{0.881}$_{\scriptstyle\pm0.022}$& \underline{0.280}$_{\scriptstyle\pm0.010}$\\
& \textbf{\hspace{2pt}+B2P-U} & \underline{0.887}$_{\scriptstyle\pm0.024}$& \underline{0.867}$_{\scriptstyle\pm0.018}$& \underline{0.289}$_{\scriptstyle\pm0.013}$& \textbf{0.896}$_{\scriptstyle\pm0.028}$& \underline{0.881}$_{\scriptstyle\pm0.021}$& \textbf{0.278}$_{\scriptstyle\pm0.013}$& \textbf{0.906}$_{\scriptstyle\pm0.025}$& \textbf{0.893}$_{\scriptstyle\pm0.028}$& \textbf{0.264}$_{\scriptstyle\pm0.010}$\\
\midrule
\midrule
\multirow{2}{*}{\textbf{Dataset}} & \multicolumn{1}{c|}{\textbf{Batch-size}} &
\multicolumn{3}{c|}{\textbf{8}} & 
\multicolumn{3}{c|}{\textbf{16}} & 
\multicolumn{3}{c}{\textbf{32 (Default)}} \\
\cmidrule(lr){2-2} \cmidrule(lr){3-5} \cmidrule(lr){6-8} \cmidrule(lr){9-11}
& \multicolumn{1}{c|}{\textbf{Metrics}} 
& \textbf{PCC}~$\uparrow$ & \textbf{MAE}~$\downarrow$ & \textbf{MSE}~$\downarrow$
& \textbf{PCC}~$\uparrow$ & \textbf{MAE}~$\downarrow$ & \textbf{MSE}~$\downarrow$
& \textbf{PCC}~$\uparrow$ & \textbf{MAE}~$\downarrow$ & \textbf{MSE}~$\downarrow$ \\
\midrule
\multirow{4}{*}[0pt]{\rotatebox[origin=c]{90}{\shortstack{10xProteomic\\(PCC)}}} & EGN \citep{yang2023egn} & 0.615$_{\scriptstyle\pm0.094}$ & 0.166$_{\scriptstyle\pm0.085}$ & 0.056$_{\scriptstyle\pm0.052}$ & 0.637$_{\scriptstyle\pm0.150}$ & \underline{0.161}${\scriptstyle\pm0.091}$ & 0.057$_{\scriptstyle\pm0.059}$ & 0.629$_{\scriptstyle\pm0.150}$ & 0.165$_{\scriptstyle\pm0.098}$ & 0.061$_{\scriptstyle\pm0.066}$ \\
& \hspace{2pt}+PLC \citep{chen2022plc} & 0.623$_{\scriptstyle\pm0.097}$ & 0.165$_{\scriptstyle\pm0.087}$ & 0.055$_{\scriptstyle\pm0.049}$ & 0.645$_{\scriptstyle\pm0.139}$ & 0.162$_{\scriptstyle\pm0.101}$ & 0.059$_{\scriptstyle\pm0.064}$ & 0.631$_{\scriptstyle\pm0.149}$ & 0.169$_{\scriptstyle\pm0.110}$ & 0.061$_{\scriptstyle\pm0.063}$ \\
\cmidrule(lr){2-11}
& \hspace{2pt}\textbf{+B2P-EMA} & \underline{0.644}${\scriptstyle\pm0.089}$ & \underline{0.159}${\scriptstyle\pm0.083}$ & \underline{0.052}${\scriptstyle\pm0.049}$ & \textbf{0.675$_{\scriptstyle\pm0.125}$} & 0.162$_{\scriptstyle\pm0.103}$ & \underline{0.056}${\scriptstyle\pm0.061}$ & \underline{0.665$_{\scriptstyle\pm0.144}$}& \underline{0.159$_{\scriptstyle\pm0.101}$}& \underline{0.055$_{\scriptstyle\pm0.059}$}\\
& \hspace{2pt}\textbf{+B2P-U} & \textbf{0.660$_{\scriptstyle\pm0.091}$} & \textbf{0.155$_{\scriptstyle\pm0.080}$} & \textbf{0.051$_{\scriptstyle\pm0.048}$} & \underline{0.667}${\scriptstyle\pm0.134}$ & \textbf{0.158$_{\scriptstyle\pm0.093}$} & \textbf{0.053$_{\scriptstyle\pm0.057}$} & \textbf{0.673}${\scriptstyle\pm0.161}$& \textbf{0.155}${\scriptstyle\pm0.087}$& \textbf{0.051}${\scriptstyle\pm0.073}$\\
\midrule
\midrule
\multirow{2}{*}{\textbf{Dataset}} & \multicolumn{1}{c|}{\textbf{Batch-size}} &
\multicolumn{3}{c|}{\textbf{4}} & 
\multicolumn{3}{c|}{\textbf{10}} & 
\multicolumn{3}{c}{\textbf{20 (Default)}} \\
\cmidrule(lr){2-2} \cmidrule(lr){3-5} \cmidrule(lr){6-8} \cmidrule(lr){9-11}
& \multicolumn{1}{c|}{\textbf{Metrics}} 
& \textbf{CCC}~$\uparrow$ & \textbf{PCC}~$\uparrow$ & \textbf{RMSE}~$\downarrow$
& \textbf{CCC}~$\uparrow$ & \textbf{PCC}~$\uparrow$ & \textbf{RMSE}~$\downarrow$
& \textbf{CCC}~$\uparrow$ & \textbf{PCC}~$\uparrow$ & \textbf{RMSE}~$\downarrow$ \\
\midrule
\multirow{3}{*}[-2pt]{\rotatebox[origin=c]{90}{\shortstack{VEATIC\\Valence\\(CCC)}}} & VEATIC \citep{ren2024veatic} & 0.437$_{\scriptstyle\pm0.036}$& 0.520$_{\scriptstyle\pm0.029}$& 0.311$_{\scriptstyle\pm0.015}$& 0.574$_{\scriptstyle\pm0.045}$& 0.638$_{\scriptstyle\pm0.028}$& 0.294$_{\scriptstyle\pm0.019}$& 0.606$_{\scriptstyle\pm0.041}$& 0.639$_{\scriptstyle\pm0.038}$& 0.317$_{\scriptstyle\pm0.012}$\\
\cmidrule(lr){2-11}
& \hspace{2pt}\textbf{+B2P-EMA} & \underline{0.460}$_{\scriptstyle\pm0.035}$& \textbf{0.541}$_{\scriptstyle\pm0.037}$& \underline{0.306}$_{\scriptstyle\pm0.013}$& \underline{0.585}$_{\scriptstyle\pm0.040}$& \underline{0.644}$_{\scriptstyle\pm0.033}$& \underline{0.291}$_{\scriptstyle\pm0.015}$& \underline{0.613}$_{\scriptstyle\pm0.035}$& \underline{0.660}$_{\scriptstyle\pm0.033}$& \textbf{0.298}$_{\scriptstyle\pm0.009}$\\
& \hspace{2pt}\textbf{+B2P-U} & \textbf{0.467}$_{\scriptstyle\pm0.025}$& \underline{0.538}$_{\scriptstyle\pm0.028}$& \textbf{0.298}$_{\scriptstyle\pm0.010}$& \textbf{0.597}$_{\scriptstyle\pm0.043}$& \textbf{0.655}$_{\scriptstyle\pm0.031}$& \textbf{0.287}$_{\scriptstyle\pm0.015}$& \textbf{0.615}$_{\scriptstyle\pm0.026}$& \textbf{0.671}$_{\scriptstyle\pm0.022}$& \underline{0.301}$_{\scriptstyle\pm0.008}$\\
\bottomrule
\end{tabular}
% \vspace{-12pt}
\label{tab:real_results}
\end{table*}

\textbf{Spatial Transcriptomic (10xProteomic).} \Cref{tab:real_results} reports 10xProteomic gene-expression prediction results under 3-fold evaluation. The result shows that replacing the naive PCC term in EGN's objective with B2P-Corr yields consistent improvements. In the high-variance setting ($B{=}8$), B2P-U is most effective, improving PCC ($0.615 \to 0.660$) while simultaneously reducing MAE and MSE. This indicates that stable correlation optimization enhances rather than compromises magnitude metrics. At the default $B{=}32$, B2P-U scales effectively, achieving the highest PCC ($0.673$) and lowest MSE. Overall, these results highlight B2P-Corr's robustness in heterogeneous, high-noise regimes where reliable global moment estimates are critical.

\paragraph{Video-based Sentiment Analysis (VEATIC).}

\Cref{tab:real_results} also demonstrates B2P-Corr effectively improves and stabilizes CCC optimization, which is more sensitive to moment mismatch than PCC because its normalization term
$D$ couples variance and mean mismatch in addition to covariance. At $B{=}4$, B2P-U improves CCC ($0.437 \to 0.467$) and RMSE ($0.311 \to 0.298$). B2P-U remains dominant at $B{=}10$, yielding the best CCC ($0.597$) and lowest RMSE ($0.287$). Even at the default $B{=}20$, B2P-U boosts CCC from $0.606$ to $0.615$, verifying that decoupling the gradient calculation from noisy batch denominators prevents the estimation error propagation.

\begin{wraptable}{r}{0.55\textwidth}  % {r} = right side, 0.55\textwidth = width reserved
\centering
\scriptsize
\vspace{-32pt}  % optional: tighten vertical alignment with surrounding text
\caption{\textbf{KoNViD-1k runtime and memory overhead.} We evaluate per-iteration runtime and VRAM usage for KVQ with different batch sizes. B2P-Corr adds negligible time and VRAM overhead, while improving test PCC relative to naive mini-batch correlation objective.}
\setlength{\tabcolsep}{1.5pt}
\begin{tabular}{l|cc|cc|cc}
\toprule
\multicolumn{1}{l|}{\textbf{Batch-size}} &
\multicolumn{2}{c|}{\textbf{2 (default)}} & 
\multicolumn{2}{c|}{\textbf{4}} & 
\multicolumn{2}{c}{\textbf{8}} \\
\cmidrule(lr){1-1} \cmidrule(lr){2-3} \cmidrule(lr){4-5} \cmidrule(lr){6-7}
\textbf{Overhead} 
& \textbf{Sec/iter} & VRAM  
& \textbf{Sec/iter} & VRAM
& \textbf{Sec/iter} & VRAM \\
\midrule
KVQ & 3.168 & 25.102 GB & 5.730 & 44.914 GB & 11.061 & 76.216 GB \\
+\textbf{B2P-EMA} & 3.169 & 25.102 GB & 5.734 & 44.914 GB & 11.093 & 76.216 GB \\
+\textbf{B2P-U} & 3.177 & 25.102 GB & 5.764 & 44.914 GB & 11.165 & 76.216 GB \\
\bottomrule
\end{tabular}
\vspace{-8px}
\label{tab:overhead}
\end{wraptable}

\subsection{Efficiency and Sensitivity}
\label{ssec:sensitivity_and_efficiency}

% \Cref{tab:overhead} reports the computational cost on KoNViD-1k. B2P-EMA introduces negligible overhead (scalar updates). B2P-U introduces a marginal increase in runtime ($<1\%$) due to the reservoir and cross-batch pairing, yet remains highly efficient with no additional VRAM usage.

% Hyperparameter analysis for the B2P-EMA update rate $\alpha$ (using 10xProteomic dataset) and the B2P-U reservoir size (using VEATIC dataset) are provided in \Cref{app:alpha_sensitivity,app:reservoir-sensitivity}.
% Both variants are stable over a practical range. For B2P-EMA, larger batches tolerate larger $\alpha$ since per-step batch statistics are closer to population moments; For B2P-U, the reservoir size controls a trade-off between moment-estimation variance and overhead: a larger reservoir reduces estimation noise but increases the risk of stale cached predictions and higher runtime. Empirically, reservoir size of $2B$ to $8B$ work well for batch size of $B$.

\Cref{tab:overhead} shows B2P-EMA adds essentially no runtime or VRAM cost, while B2P-U incurs less than $1\%$ extra runtime from reservoir updates and cross-batch pairing with no additional VRAM. Sensitivity to $\alpha$ and reservoir size is reported in \Cref{app:alpha_sensitivity,app:reservoir-sensitivity}. Both variants remain stable across practical ranges. Empirically, for B2P-EMA, larger batches tolerate larger $\alpha$ since batch statistics are closer to population moments. For B2P-U, a larger reservoir reduces estimation noise but increases higher runtime.

\section{Conclusion}

% We introduced \textbf{B2P-Corr}, bridging the batch-to-population gap in correlation learning by optimizing decomposable stop-gradient pseudo-losses with lightweight global moment sketches. \textbf{B2P-EMA} yields asymptotically consistent PCC/CCC gradients with negligible overhead, while \textbf{B2P-U} further reduces variance via incomplete U-statistics. We quantify the $O(1/B)$ bias of naive mini-batch PCC/CCC gradients and prove consistency/variance-reduction guarantees for our estimators. Across synthetic and real-world datasets, B2P-Corr stabilizes training and improves correlation with negligible overhead. By offering a statistically grounded and practical drop-in replacement, B2P-Corr establishes a robust path for correlation-driven deep learning.
% We introduced \textbf{B2P-Corr}, bridging the batch-to-population gap in correlation learning via decomposable stop-gradient pseudo-losses and global moment sketches. \textbf{B2P-EMA} achieves asymptotic gradient tracking, while \textbf{B2P-U} uses sampled cross-batch U-statistics and reservoir-based stabilization in an $O(B)$ implementation. We quantify the $O(1/B)$ naive mini-batch bias, prove a tracking result for B2P-EMA, and provide an idealized control-variate analysis for B2P-U. Across synthetic and real-world datasets, this principled drop-in replacement stabilizes training and improves correlation with negligible overhead.

We introduced \textbf{B2P-Corr}, a simple framework for closing the batch-to-population gap in correlation learning with decomposable stop-gradient pseudo-losses and lightweight moment sketches. \textbf{B2P-EMA} provides asymptotic gradient tracking, while \textbf{B2P-U} adds sampled cross-batch U-statistics with reservoir-based stabilization. Across synthetic and real-world tasks, B2P-Corr reduces the batch-to-population gap, stabilizes training, and improves correlation with negligible overhead.

\clearpage

\section{Acknowledgement}
This work was partially supported by US National Science Foundation IIS-2412195, CCF-2400785, the Cancer Prevention and Research Institute of Texas (CPRIT) award (RP230363), the National Institutes of Health (NIH) R01 award (1R01AI190103-01) and Microsoft Accelerate Foundation Models Research (2024).

% \section*{References}

% \medskip
% {
% \small
% % [1] Alexander, J.A.\ \& Mozer, M.C.\ (1995) Template-based algorithms for
% % connectionist rule extraction. In G.\ Tesauro, D.S.\ Touretzky and T.K.\ Leen
% % (eds.), {\it Advances in Neural Information Processing Systems 7},
% % pp.\ 609--616. Cambridge, MA: MIT Press.
% }

\bibliography{references}
\bibliographystyle{plainnat}

%%%%%%%%%%%%%%%%%%%%%%%%%%%%%%%%%%%%%%%%%%%%%%%%%%%%%%%%%%%%

\clearpage

\appendix

\begin{center}
    \Large \textbf{Appendix}
\end{center}

\allowdisplaybreaks

\section{Reproducibility}
\label{app:reproduce}

We provide a PyTorch~\citep{paszke2019pytorch} implementation of B2P-EMA and B2P-U in \href{https://github.com/jyan97/b2p-corr}{Github}. B2P-EMA and B2P-U are implemented as \texttt{torch.nn.Module} object, enabling a simple drop-in replacement of the naive PCC/CCC loss. An example usage is shown below:
\begin{lstlisting}[style=b2pcode]
from b2p import B2PCorrEMA, B2PCorrU

pcc_loss_ema = B2PCorrEMA(metric="PCC", alpha=0.05)
pcc_loss_u = B2PCorrU(metric="CCC", alpha=0.05, reservoir_size=256)

loss = pcc_loss_ema(y_label, y_predict)
loss.backward()
\end{lstlisting}

\section{Additional Proofs and Derivations}
\label{sec:appendix}

\subsection{Useful identities: U-statistics}
\label{sec:appendix-ustat}

\begin{lemma}[U-statistic identities used in \S\ref{sec:b2p-u}]
\label{lem:ustat}
Let $(y_1,f_1)$ and $(y_2,f_2)$ be i.i.d.\ copies of $(y,f_\theta(x))$ with $\E[y^2]<\infty$ and $\E[f^2]<\infty$.
Then
\begin{align}
\Cov(y,f) &= \frac12\,\E\big[(y_1-y_2)(f_1-f_2)\big], \label{eq:app-u-cov}\\
\Var(f)   &= \frac12\,\E\big[(f_1-f_2)^2\big]. \label{eq:app-u-var}
\end{align}
\end{lemma}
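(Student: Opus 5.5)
The plan is to expand each product and use independence together with the identical distribution of the two copies. Square-integrability makes every expectation below finite. Since $\E[y^2]<\infty$ and $\E[f^2]<\infty$, Cauchy--Schwarz gives $\E|yf|<\infty$. The cross terms are also integrable: by independence, $\E|y_1 f_2|=\E|y|\,\E|f|<\infty$. This justifies splitting the expectations term by term.

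For \eqref{eq:app-u-cov}, expand
\[
(y_1-y_2)(f_1-f_2)=y_1f_1+y_2f_2-y_1f_2-y_2f_1 .
\]
The two diagonal terms each have expectation $\E[yf]$, because the copies share the law of $(y,f)$. The two cross terms factor by independence, each giving $\E[y_1f_2]=\E[y]\,\E[f]=\mu_y\mu_f$. Hence
\[
\tfrac12\,\E\big[(y_1-y_2)(f_1-f_2)\big]=\E[yf]-\mu_y\mu_f=\Cov(y,f).
\]

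Identity \eqref{eq:app-u-var} is the special case $y=f$ of the same computation:
\[
\tfrac12\,\E\big[(f_1-f_2)^2\big]=\tfrac12\big(2\E[f^2]-2\mu_f^2\big)=\Var(f).
\]

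An alternative that avoids the integrability bookkeeping is to center first. Write $y_i-y_j=(y_i-\mu_y)-(y_j-\mu_y)$, and similarly for $f$. The cross terms then vanish because each centered factor has mean zero and the copies are independent.

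There is no real obstacle here. The only point needing care is confirming that the cross terms are integrable before splitting the expectations, and the finite-moment hypotheses cover this.
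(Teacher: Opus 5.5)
Your proposal is correct and follows essentially the same route as the paper: expand the products, evaluate the diagonal terms by identical distribution, and factor the cross terms by independence, giving $2(\E[yf]-\E[y]\E[f])$ and $2(\E[f^2]-(\E[f])^2)$. Your explicit integrability check (Cauchy--Schwarz for $\E|yf|$ and independence for the cross terms) is a small gain in rigor that the paper leaves implicit.
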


\begin{proof}
We expand and use independence.
For covariance,
\begin{align*}
\E[(y_1-y_2)(f_1-f_2)]
&= \E[y_1f_1] - \E[y_1f_2] - \E[y_2f_1] + \E[y_2f_2]\\
&= \E[yf] - \E[y_1]\E[f_2] - \E[y_2]\E[f_1] + \E[yf]\\
&= 2\big(\E[yf]-\E[y]\E[f]\big) \;=\; 2\,\Cov(y,f),
\end{align*}
which gives \Cref{eq:app-u-cov}.
Similarly,
\begin{align*}
\E[(f_1-f_2)^2]
&= \E[f_1^2] - 2\E[f_1f_2] + \E[f_2^2]
= 2\E[f^2] - 2(\E[f])^2
= 2\,\Var(f),
\end{align*}
giving \Cref{eq:app-u-var}.
\end{proof}

\subsection{Proof of Lemma~\ref{lem:pcc-grad} (population gradient of PCC)}
\label{sec:appendix-proof-pcc}

\begin{proof}[Proof of Lemma~\ref{lem:pcc-grad}]
Assume Assumption~\ref{assump:standing} and write $f=f_\theta(x)$.
Recall
\[
\rho(\theta) \;=\; \frac{C}{\sigma_y\sigma_f},
\qquad
C=\E[(y-\mu_y)(f-\mu_f)],
\qquad
\sigma_f = \sqrt{\Var(f)}.
\]
Since $\E[y-\mu_y]=0$, we may rewrite
\[
C = \E[(y-\mu_y)f].
\]
Using differentiation under the expectation,
\begin{equation}
\label{eq:app-dC}
\nabla_\theta C
= \nabla_\theta \E[(y-\mu_y)f]
= \E[(y-\mu_y)\,\nabla_\theta f].
\end{equation}

Next, $\sigma_f = (\sigma_f^2)^{1/2}$ with $\sigma_f^2=\Var(f)=\E[f^2]-(\E[f])^2$.
Again interchanging $\nabla_\theta$ and $\E$ gives
\begin{align}
\nabla_\theta \sigma_f^2
&= \nabla_\theta \E[f^2] - 2\mu_f \nabla_\theta \mu_f
= \E[2f\,\nabla_\theta f] - 2\mu_f\,\E[\nabla_\theta f]
= 2\,\E[(f-\mu_f)\nabla_\theta f]. \label{eq:app-dvar}
\end{align}
Therefore,
\begin{equation}
\label{eq:app-dsig}
\nabla_\theta \sigma_f
= \frac{1}{2\sigma_f}\nabla_\theta \sigma_f^2
= \frac{1}{\sigma_f}\,\E[(f-\mu_f)\nabla_\theta f].
\end{equation}

Now apply the quotient rule to $\rho=C/(\sigma_y\sigma_f)$, noting $\sigma_y$ is constant in $\theta$:
\begin{align*}
\nabla_\theta \rho
&= \frac{1}{\sigma_y}\left(\frac{\nabla_\theta C}{\sigma_f} - \frac{C}{\sigma_f^2}\nabla_\theta\sigma_f\right)\\
&= \frac{1}{\sigma_y}\left(\frac{1}{\sigma_f}\E[(y-\mu_y)\nabla_\theta f]
-\frac{C}{\sigma_f^2}\cdot\frac{1}{\sigma_f}\E[(f-\mu_f)\nabla_\theta f]\right)\\
&= \E\!\left[\left(\frac{y-\mu_y}{\sigma_y\sigma_f}-\frac{C}{\sigma_y\sigma_f}\frac{f-\mu_f}{\sigma_f^2}\right)\nabla_\theta f\right].
\end{align*}
Since $\rho=C/(\sigma_y\sigma_f)$, we obtain
\[
\nabla_\theta \rho
= \E\Bigg[
\Bigg(
\frac{y-\mu_y}{\sigma_y \sigma_f}
-
\rho\,\frac{f-\mu_f}{\sigma_f^2}
\Bigg)\nabla_\theta f
\Bigg],
\]
which is exactly \Cref{eq:pcc-grad}.
\end{proof}

\subsection{Proof of Lemma~\ref{lem:ccc-grad} (population gradient of CCC)}
\label{sec:appendix-proof-ccc}

\begin{proof}[Proof of Lemma~\ref{lem:ccc-grad}]
Assume Assumption~\ref{assump:standing} and write $f=f_\theta(x)$.
Recall
\[
\mathrm{CCC}(\theta)=\frac{2C}{D},
\qquad
D=\sigma_y^2+\sigma_f^2+(\mu_y-\mu_f)^2.
\]
We already have $\nabla_\theta C=\E[(y-\mu_y)\nabla_\theta f]$ from \Cref{eq:app-dC}.

Next compute $\nabla_\theta D$. Since $\sigma_y^2$ and $\mu_y$ are constants in $\theta$,
\[
\nabla_\theta D
= \nabla_\theta \sigma_f^2 + \nabla_\theta(\mu_y-\mu_f)^2.
\]
From \Cref{eq:app-dvar}, $\nabla_\theta\sigma_f^2=2\E[(f-\mu_f)\nabla_\theta f]$.
Also, $(\mu_y-\mu_f)^2=(\mu_f-\mu_y)^2$ so
\[
\nabla_\theta(\mu_y-\mu_f)^2
= 2(\mu_f-\mu_y)\,\nabla_\theta\mu_f
= 2(\mu_f-\mu_y)\,\E[\nabla_\theta f].
\]
Combining and regrouping,
\begin{align}
\nabla_\theta D
&= 2\E[(f-\mu_f)\nabla_\theta f] + 2(\mu_f-\mu_y)\E[\nabla_\theta f] \nonumber\\
&= 2\E[(f-\mu_y)\nabla_\theta f]. \label{eq:app-dD}
\end{align}

Finally, apply the quotient rule to $2C/D$:
\begin{align*}
\nabla_\theta \mathrm{CCC}
&= \frac{2}{D^2}\Big(D\,\nabla_\theta C - C\,\nabla_\theta D\Big)\\
&= \frac{2}{D^2}\Big(D\,\E[(y-\mu_y)\nabla_\theta f] - C\cdot 2\E[(f-\mu_y)\nabla_\theta f]\Big)\\
&= \E\Bigg[
\frac{2}{D^2}\Big(D(y-\mu_y)-2C(f-\mu_y)\Big)\,\nabla_\theta f
\Bigg],
\end{align*}
which is exactly \Cref{eq:ccc-grad}.
\end{proof}

\section{Proofs for Theoretical Studies}
\label{sec:appendix-proofs}

\subsection{Pseudo-loss gradients recover the population gradients}
\label{sec:appendix-proof-pseudoloss}

\begin{lemma}[Pseudo-losses match population gradients when moments are correct]
\label{lem:pseudoloss-matches}
Let $\mathcal{L}_\rho$ and $\mathcal{L}_{\mathrm{CCC}}$ be defined in \Cref{eq:pseudo-pcc,eq:pseudo-ccc}.
If the population moments $(\mu_y,\sigma_y,\mu_f,\sigma_f,C,D,\rho)$ are used inside $\operatorname{sg}(\cdot)$,
then
\[
\nabla_\theta \E[\mathcal{L}_\rho(f_\theta(x);y)] = -\nabla_\theta \rho(\theta),
\qquad
\nabla_\theta \E[\mathcal{L}_{\mathrm{CCC}}(f_\theta(x);y)] = -\nabla_\theta \mathrm{CCC}(\theta).
\]
\end{lemma}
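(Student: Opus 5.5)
The plan is to make precise what ``$\nabla_\theta\E[\mathcal{L}]$'' means when the moments sit inside $\operatorname{sg}(\cdot)$. The moments are frozen at their population values at the current $\theta$ and treated as constants during differentiation; only the explicit occurrences of $f=f_\theta(x)$ in \Cref{eq:pseudo-pcc,eq:pseudo-ccc} are differentiated. With this reading, $\E[\mathcal{L}_\rho]$ is the expectation of a quadratic polynomial in $f$ whose coefficients are deterministic and $\theta$-independent. Its gradient is therefore a linear combination of $\nabla_\theta\E[yf]$, $\nabla_\theta\E[f]$ and $\nabla_\theta\E[f^2]$. Assumption~\ref{assump:standing}(1) lets us move $\nabla_\theta$ inside the expectation for each of these, and Assumption~\ref{assump:standing}(2) makes all the expectations finite.

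The first step is the chain rule applied pointwise. Since $\mathcal{L}_\rho$ depends on $\theta$ only through $f$, we have $\nabla_\theta\mathcal{L}_\rho=(\partial\mathcal{L}_\rho/\partial f)\nabla_\theta f$. The derivative $\partial\mathcal{L}_\rho/\partial f$ is already computed in the display following \Cref{eq:pseudo-ccc}, and equals
\[
-\frac{y-\mu_y}{\sigma_y\sigma_f}+\rho\,\frac{f-\mu_f}{\sigma_f^2}.
\]
Taking expectations after the interchange gives
\[
\nabla_\theta\E[\mathcal{L}_\rho]=-\E\!\left[\left(\frac{y-\mu_y}{\sigma_y\sigma_f}-\rho\,\frac{f-\mu_f}{\sigma_f^2}\right)\nabla_\theta f\right].
\]
By Lemma~\ref{lem:pcc-grad} this is exactly $-\nabla_\theta\rho$.

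The CCC case follows the same steps. Here $\partial\mathcal{L}_{\mathrm{CCC}}/\partial f=-\frac{2}{D^2}\big(D(y-\mu_y)-2C(f-\mu_y)\big)$, and Lemma~\ref{lem:ccc-grad} identifies the resulting expectation as $-\nabla_\theta\mathrm{CCC}$. Non-degeneracy, Assumption~\ref{assump:standing}(3), guarantees that $\sigma_f$ and $D$ are positive, so all coefficients are finite. Under the $\varepsilon$-stabilized variant the same argument applies verbatim, with the stabilized constants carried through both the pseudo-loss and the corresponding lemma.

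The main obstacle is justifying the interchange for the composite expressions $\E[(y-\mu_y)f]$ and $\E[(f-c)^2]$ with $c$ constant. I would expand each into the raw coordinates $f$, $f^2$ and $yf$ of $s$ and invoke item (1) coordinatewise. The term $\E[y\nabla_\theta f]$ is finite by Cauchy--Schwarz, using $\E[y^2]<\infty$ and $\E\|\nabla_\theta s\|_F^2<\infty$. One point has to be stated explicitly: the frozen moments are evaluated at the current $\theta$ but are not differentiated. The lemma is an identity at each fixed $\theta$ with the constants plugged in, and it is not a claim about differentiating $\theta\mapsto\E[\mathcal{L}(\cdot;\text{moments}(\theta))]$.
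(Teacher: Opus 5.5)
Your proposal is correct and follows essentially the same route as the paper. Both treat the $\operatorname{sg}$-wrapped moments as constants, apply the chain rule through $f$ to obtain $\E[(\partial\mathcal{L}/\partial f)\nabla_\theta f]$, and match the resulting weight to the negative per-example weight in Lemma~\ref{lem:pcc-grad} (resp.\ Lemma~\ref{lem:ccc-grad}); your added justification of the interchange, by expanding into the raw coordinates $f$, $f^2$, $yf$ under Assumption~\ref{assump:standing}(1), is sound and is a step the paper leaves implicit.
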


\begin{proof}
All moments are wrapped in $\operatorname{sg}(\cdot)$ and thus treated as constants in backpropagation. Hence,
\[
\nabla_\theta \E[\mathcal{L}_\rho(f;y)]
= \E\!\left[\frac{\partial \mathcal{L}_\rho}{\partial f}\,\nabla_\theta f\right].
\]
Differentiating \Cref{eq:pseudo-pcc} w.r.t.\ $f$ gives
\[
\frac{\partial \mathcal{L}_\rho}{\partial f}
= -\left(\frac{y-\mu_y}{\sigma_y\sigma_f}-\rho\,\frac{f-\mu_f}{\sigma_f^2}\right),
\]
which matches the negative of the per-example weight in Lemma~\ref{lem:pcc-grad}. Thus
$\nabla_\theta \E[\mathcal{L}_\rho]=-\nabla_\theta\rho$.
The CCC case is identical: differentiating \Cref{eq:pseudo-ccc} yields
\[
\frac{\partial \mathcal{L}_{\mathrm{CCC}}}{\partial f}
= -\frac{2}{D^2}\Big(D(y-\mu_y)-2C(f-\mu_y)\Big),
\]
which matches the negative weight in Lemma~\ref{lem:ccc-grad}.
\end{proof}

%%% 最新版本 不需要^4的高阶矩

\subsection{Proof of Proposition~\ref{prop:bias} (Bias of naive mini-batch gradients)}
\label{sec:appendix-proof-bias}

\begin{proof}[Proof of Proposition~\ref{prop:bias} (PCC case)]
We present the PCC case; CCC follows identically since it is also a smooth ratio of finitely many
raw moments of $(y,f)$ on the non-degenerate domain.

\paragraph{Step 1: PCC as a smooth function of raw moments.}
Fix $\theta$ and write $f=f_\theta(x)$. Define
\[
r(\theta) := (r_1,r_2,r_3)^\top := \big(\E[f],\ \E[f^2],\ \E[yf]\big)^\top.
\]
On the domain $\{r: r_2-r_1^2>0\}$ define
\[
\varphi_\rho(r) := \frac{r_3-\mu_y r_1}{\sigma_y\sqrt{r_2-r_1^2}},
\qquad\text{so that}\qquad
\rho(\theta)=\varphi_\rho(r(\theta)).
\]
Given an i.i.d.\ mini-batch $\mathcal{B}=\{(x_i,y_i)\}_{i=1}^B$,
let $s_i=(f_i,f_i^2,y_i f_i)^\top$ and $\hat r = \frac1B\sum_{i=1}^B s_i$, so
$\widehat\rho_{\mathcal B}=\varphi_\rho(\hat r)$.

Let $J(r):=\nabla_r\varphi_\rho(r)^\top\in\mathbb{R}^{1\times 3}$ and $H(r):=\nabla_r^2\varphi_\rho(r)\in\mathbb{R}^{3\times 3}$.
Under Assumption~\ref{assump:standing}(3), $\varphi_\rho$ is $C^3$ on a neighborhood of $r(\theta)$
and its derivatives up to third order are bounded there.

\paragraph{Step 2: Taylor expansion and decomposition.}
Fix any coordinate $k\in\{1,\dots,d\}$ and write $\partial_k := \partial/\partial \theta_k$.
By the chain rule,
\[
\partial_k \widehat\rho_{\mathcal B} = J(\hat r)\,\partial_k \hat r,
\qquad
\partial_k \hat r=\frac1B\sum_{i=1}^B \partial_k s_i.
\]
Let $\delta r:=\hat r-r$. A second-order Taylor expansion of $J$ around $r$ yields
\[
J(\hat r)=J(r)+\delta r^\top H(r)+R,
\qquad\text{with}\qquad
\|R\|\le C\,\|\delta r\|^2
\]
for some constant $C<\infty$ (depending on third derivatives near $r$). Therefore,
\begin{equation}
\label{eq:bias-decomp-clean}
\partial_k \widehat\rho_{\mathcal B}
= \underbrace{J(r)\,\partial_k \hat r}_{(i)}
+ \underbrace{\delta r^\top H(r)\,\partial_k \hat r}_{(ii)}
+ \underbrace{R\,\partial_k \hat r}_{(iii)}.
\end{equation}

\paragraph{Term (i): unbiased leading term.}
By Assumption~\ref{assump:standing}(1), $\E[\partial_k \hat r]=\partial_k r$. Hence
\[
\E\!\left[J(r)\,\partial_k \hat r\right]=J(r)\,\partial_k r=\partial_k \rho.
\]

\paragraph{Term (ii): the $O(1/B)$ bias term.}
Write $u_i:=s_i-\E[s]\in\mathbb{R}^3$ and $g_i^{(k)}:=\partial_k s_i\in\mathbb{R}^3$. Then
\[
\delta r=\frac1B\sum_{j=1}^B u_j,
\qquad
\partial_k \hat r=\frac1B\sum_{i=1}^B g_i^{(k)}.
\]
Using independence across samples and $\E[u_j]=0$,
\[
\E\!\left[\delta r\,(\partial_k \hat r)^\top\right]
= \frac{1}{B^2}\sum_{j=1}^B\sum_{i=1}^B \E[u_j (g_i^{(k)})^\top]
= \frac{1}{B}\,\E\!\left[(s-\E[s])(\partial_k s)^\top\right].
\]
Since $H(r)$ is bounded on the neighborhood, $\E[\delta r^\top H(r)\,\partial_k \hat r]=O(B^{-1})$, and it is generally nonzero
because $s$ and $\partial_k s$ come from the same sample.

\paragraph{Term (iii): the Taylor remainder contributes $O(1/B)$.}
Using $\|R\|\le C\|\delta r\|^2$ and the triangle inequality,
\[
\big|\E[R\,\partial_k \hat r]\big|
\le \E[\|R\|\,\|\partial_k \hat r\|]
\le C\,\E\big[\|\delta r\|^2\,\|\partial_k \hat r\|\big].
\]
Let $G^{(k)}:=\partial_k \hat r=\frac1B\sum_{i=1}^B g_i^{(k)}$. Then
\[
\E\big[\|\delta r\|^2\,\|G^{(k)}\|\big]
\le \frac1B\sum_{i=1}^B \E\big[\|\delta r\|^2\,\|g_i^{(k)}\|\big]
= \E\big[\|\delta r\|^2\,\|g_1^{(k)}\|\big]
\]
by exchangeability. Expanding $\|\delta r\|^2$ using $\delta r=\frac1B\sum_{j=1}^B u_j$ gives
\[
\|\delta r\|^2
= \frac{1}{B^2}\sum_{j=1}^B\sum_{\ell=1}^B \langle u_j,u_\ell\rangle.
\]
Taking expectation against $\|g_1^{(k)}\|$ and using independence and $\E[u_j]=0$,
all mixed terms vanish, yielding
\[
\E\big[\|\delta r\|^2\,\|g_1^{(k)}\|\big]
= \frac{B-1}{B^2}\,\E[\|u\|^2]\,\E[\|g^{(k)}\|]
\;+\; \frac{1}{B^2}\,\E[\|u\|^2\,\|g^{(k)}\|]
= O(B^{-1}),
\]
provided the expectations are finite. This holds under Assumption~\ref{assump:standing}(2)
since by triangle inequality $\|u\|^2 = \|s - \E[s]\|^2 \lesssim 1+\|s\|^2$ and $\|g^{(k)}\|\le \|\nabla_\theta s\|_F$.

Consequently, $\E[R\,\partial_k \hat r]=O(B^{-1})$.

\paragraph{Conclusion.}
Taking expectations in \Cref{eq:bias-decomp-clean} yields
\[
\E_{\mathcal B}\big[\partial_k \widehat\rho_{\mathcal B}\big]
= \partial_k \rho + O(B^{-1}).
\]
Since this holds for each coordinate $k$, we obtain
$\E_{\mathcal B}[\nabla_\theta \widehat\rho_{\mathcal B}] = \nabla_\theta \rho + O(B^{-1})$.
% The CCC case follows identically by defining $\varphi_{\mathrm{CCC}}$ as the smooth mapping from $r$
% to $\mathrm{CCC}=2C/D$ (on $D>0$).
The mini-batch gradient analysis for CCC is identical after defining the corresponding smooth map
\[
\varphi_{\mathrm{CCC}}(r)= \frac{2(r_3-\mu_y r_1)}{\sigma_y^2 + (r_2-r_1^2) + (\mu_y-r_1)^2}
\text{ on the domain } \sigma_y^2 + (r_2-r_1^2) + (\mu_y-r_1)^2>0.
\]
\end{proof}

\subsection{Formal Tracking Regime for Theorem~\ref{prop:ema-consistency}}
\label{app:ema-tracking-assumptions}

\begin{assumption}[B2P-EMA tracking regime]
\label{assump:ema}
Assume Assumption~\ref{assump:standing} holds for all iterates $\theta_t$. Let $m(\theta):=(\mu_f(\theta),\sigma_f^2(\theta),C(\theta))$, $z_t:=(\hat\mu_f^{(t)},\widehat{\sigma_f^2}{}^{(t)},\hat C^{(t)})$, and let $\mathcal F_t$ be all information available before drawing the batch at step $t$. Let $h_t\in\mathbb R^3$ be the detached batch-moment estimate used in \Cref{eq:ema-updates}, and let $\bar g(\theta,z)$ denote the conditional mean gradient of the B2P-EMA pseudo-loss. Assume:
\begin{enumerate}[label=(\arabic*), leftmargin=1.5em]
\item \textbf{Unbiased moment noise:} $\E[h_t\mid \mathcal F_t]=m(\theta_t)$ and $\E[\|h_t-m(\theta_t)\|^2\mid \mathcal F_t]\le V$ for some $V<\infty$.
\item \textbf{EMA rates:} $\alpha_t\in(0,1]$, $\sum_t \alpha_t=\infty$, and $\sum_t \alpha_t^2<\infty$.
\item \textbf{Slow target drift:} $\|m(\theta_{t+1})-m(\theta_t)\|\le \alpha_t \beta_t$ a.s. for some $\beta_t\to 0$.
\item \textbf{Stable plug-in gradient:} for the $\varepsilon$-stabilized PCC/CCC pseudo-losses, $\bar g(\theta_t,\cdot)$ is Lipschitz, i.e.,
$\|\bar g(\theta_t,z)-\bar g(\theta_t,z')\|\le L_g\|z-z'\|$ for some $L_g<\infty$.
\end{enumerate}
\end{assumption}

\subsection{Proof of Theorem~\ref{prop:ema-consistency} (Tracking of B2P-EMA)}
\label{sec:appendix-proof-ema}

\begin{proof}
Write
\[
m_t:=m(\theta_t),\qquad
z_t:=\bigl(\hat\mu_f^{(t)},\widehat{\sigma_f^2}{}^{(t)},\hat C^{(t)}\bigr),\qquad
e_t:=z_t-m_t.
\]

\paragraph{Step 1: Tracking of the moment sketch.}
The EMA recursion can be written as
\[
z_{t+1}=z_t+\alpha_t(h_t-z_t)
      =z_t+\alpha_t\bigl(m_t-z_t+\xi_{t+1}\bigr),
\]
where
\[
\xi_{t+1}:=h_t-m_t.
\]
By Assumption~\ref{assump:ema}(1),
\[
\E[\xi_{t+1}\mid \mathcal F_t]=0,
\qquad
\E[\|\xi_{t+1}\|^2\mid \mathcal F_t]\le V.
\]
Let
\[
\Delta_t:=m_{t+1}-m_t.
\]
Then
\[
e_{t+1}=(1-\alpha_t)e_t+\alpha_t\xi_{t+1}-\Delta_t.
\]

Conditioning on $\mathcal F_t$, we do not pull $\Delta_t$ outside the expectation, since $\theta_{t+1}$ may depend on the same batch used to form $h_t$. Instead, define
\[
u_t:=(1-\alpha_t)e_t+\alpha_t\xi_{t+1},
\qquad
e_{t+1}=u_t-\Delta_t.
\]
Using Young's inequality with $\eta_t=\alpha_t/2$,
\[
\|e_{t+1}\|^2
\le
\Bigl(1+\frac{\alpha_t}{2}\Bigr)\|u_t\|^2
+\Bigl(1+\frac{2}{\alpha_t}\Bigr)\|\Delta_t\|^2.
\]
Since $e_t$ is $\mathcal F_t$-measurable and $\E[\xi_{t+1}\mid \mathcal F_t]=0$,
\begin{align*}
\E[\|u_t\|^2\mid\mathcal F_t]
&=
(1-\alpha_t)^2\|e_t\|^2
+\alpha_t^2\E[\|\xi_{t+1}\|^2\mid\mathcal F_t] \\
&\le
(1-\alpha_t)\|e_t\|^2 + V\alpha_t^2.
\end{align*}
Therefore,
\begin{align*}
\E[\|e_{t+1}\|^2\mid\mathcal F_t]
&\le
\Bigl(1+\frac{\alpha_t}{2}\Bigr)\Bigl((1-\alpha_t)\|e_t\|^2+V\alpha_t^2\Bigr)
+\Bigl(1+\frac{2}{\alpha_t}\Bigr)\|\Delta_t\|^2 \\
&\le
\Bigl(1-\frac{\alpha_t}{2}\Bigr)\|e_t\|^2
+2V\alpha_t^2
+\Bigl(1+\frac{2}{\alpha_t}\Bigr)\|\Delta_t\|^2.
\end{align*}
Applying Assumption~\ref{assump:ema}(3) and $\alpha_t\in(0,1]$,
\[
\Bigl(1+\frac{2}{\alpha_t}\Bigr)\|\Delta_t\|^2
\le
\Bigl(1+\frac{2}{\alpha_t}\Bigr)\alpha_t^2\beta_t^2
\le
3\alpha_t\beta_t^2.
\]
Hence
\[
\E[\|e_{t+1}\|^2\mid\mathcal F_t]
\le
\Bigl(1-\frac{\alpha_t}{2}\Bigr)\|e_t\|^2
+3\alpha_t\beta_t^2
+2V\alpha_t^2.
\]
Taking expectation and writing
\[
a_t:=\E[\|e_t\|^2]
\]
gives
\[
a_{t+1}
\le
\Bigl(1-\frac{\alpha_t}{2}\Bigr)a_t
+3\alpha_t\beta_t^2
+2V\alpha_t^2.
\]

Fix any $\varepsilon>0$. Since $\beta_t\to0$, there exists $T_\varepsilon$ such that
$3\beta_t^2\le \varepsilon$ for all $t\ge T_\varepsilon$. Hence, for $t\ge T_\varepsilon$,
\[
a_{t+1}
\le
\Bigl(1-\frac{\alpha_t}{2}\Bigr)a_t
+\varepsilon\alpha_t
+2V\alpha_t^2.
\]
Equivalently, we have
\[
a_{t+1}-2\varepsilon
\le
\Bigl(1-\frac{\alpha_t}{2}\Bigr)(a_t-2\varepsilon)
+2V\alpha_t^2.
\]
Iterating from $T_\varepsilon$ to $t$ yields
\begin{align*}
a_{t+1}-2\varepsilon
&\le
\prod_{k=T_\varepsilon}^{t}\Bigl(1-\frac{\alpha_k}{2}\Bigr)(a_{T_\varepsilon}-2\varepsilon)
+2V\sum_{k=T_\varepsilon}^{t}\alpha_k^2
\prod_{j=k+1}^{t}\Bigl(1-\frac{\alpha_j}{2}\Bigr).
\end{align*}

The first term vanishes because $\sum_t\alpha_t=\infty$, and the second vanishes because $\sum_t\alpha_t^2<\infty$. Therefore
\[
\limsup_{t\to\infty} a_t \le 2\varepsilon.
\]
Since $\varepsilon>0$ is arbitrary, we conclude
\[
\E\|z_t-m(\theta_t)\|^2 = a_t \to 0.
\]

\paragraph{Step 2: Consistency of the plug-in gradient.}
Let
\[
\bar g_t:=\E[g_t\mid \mathcal F_t]=\bar g(\theta_t,z_t).
\]
By Lemma~\ref{lem:pseudoloss-matches}, plugging the true moments into the stop-gradient pseudo-loss exactly recovers the population gradient:
\[
\bar g(\theta_t,m(\theta_t))=\nabla J(\theta_t).
\]
Using Assumption~\ref{assump:ema}(4) and Jensen's inequality,
\begin{align*}
\E\|\bar g_t-\nabla J(\theta_t)\|
&=
\E\|\bar g(\theta_t,z_t)-\bar g(\theta_t,m(\theta_t))\| \\
&\le
L_g\,\E\|z_t-m(\theta_t)\| \\
&\le
L_g\,\sqrt{\E\|z_t-m(\theta_t)\|^2}
\to 0.
\end{align*}
This proves the asymptotic consistency of the moving-target plug-in gradient.
\end{proof}

\begin{remark}[Constant-$\alpha$ heuristic]
If $\alpha_t\equiv \alpha\in(0,1]$ and
\[
\sup_t \|m(\theta_{t+1})-m(\theta_t)\| \le d,
\]
then the same one-step recursion gives
\[
\limsup_{t\to\infty}\E\|z_t-m(\theta_t)\|^2
\;\le\;
\frac{4d^2}{\alpha^2}+2V\alpha.
\]
Thus a small constant EMA still keeps the sketch accurate when the target moments drift on a slower scale than $\alpha$.
\end{remark}

\subsection{Proof of Proposition~\ref{prop:var-red} (Variance reduction of B2P-U)}
\label{sec:appendix-proof-varred}

\begin{proof}[Proof of Proposition~\ref{prop:var-red}]
Unbiasedness follows from linearity of expectation:
\[
\E[\hat M^\star]
= \E[\hat M_{\mathrm{U}}] + \lambda(\E[\hat M_{\mathcal R}] - \E[\hat M_{\mathcal R}])
= M.
\]
For the variance,
\[
\Var(\hat M^\star)
= \Var(\hat M_{\mathrm{U}}) + \lambda^2 \Var(\hat M_{\mathcal R})
+ 2\lambda\,\Cov(\hat M_{\mathrm{U}},\hat M_{\mathcal R}).
\]
This is a convex quadratic in $\lambda$, minimized at
$\lambda^\star = -\Cov(\hat M_{\mathrm{U}},\hat M_{\mathcal R})/\Var(\hat M_{\mathcal R})$.
Substituting $\lambda^\star$ yields the stated variance reduction, with strict inequality when
$\Cov(\hat M_{\mathrm{U}},\hat M_{\mathcal R})\neq 0$.
\end{proof}

% \subsection{Exact $O(B)$ evaluation of the complete cross-batch estimator}
% \label{app:exact-ob-crossbatch}

% For two batches $\{(y_i,\tilde f_i)\}_{i=1}^B$ and $\{(y'_j,\tilde f'_j)\}_{j=1}^B$, the complete cross-batch estimators over all $B^2$ pairs satisfy
% \begin{align*}
% \hat C_{\mathrm{full}}
% &:= \frac{1}{2B^2}\sum_{i=1}^B\sum_{j=1}^B (y_i-y'_j)(\tilde f_i-\tilde f'_j) \\
% &= \frac12\left(
% \frac1B\sum_{i=1}^B y_i\tilde f_i
% -\bar y_t\,\bar{\tilde f}_{t-1}
% -\bar{\tilde f}_t\,\bar y_{t-1}
% +\frac1B\sum_{j=1}^B y'_j\tilde f'_j
% \right),\\[4pt]
% \widehat{\sigma^2_{f,\mathrm{full}}}
% &:= \frac{1}{2B^2}\sum_{i=1}^B\sum_{j=1}^B (\tilde f_i-\tilde f'_j)^2 \\
% &= \frac12\left(
% \frac1B\sum_{i=1}^B \tilde f_i^2
% -2\bar{\tilde f}_t\,\bar{\tilde f}_{t-1}
% +\frac1B\sum_{j=1}^B {\tilde f'_j}^2
% \right),
% \end{align*}
% where $\bar y_t=\frac1B\sum_{i=1}^B y_i$, $\bar y_{t-1}=\frac1B\sum_{j=1}^B y'_j$, $\bar{\tilde f}_t=\frac1B\sum_{i=1}^B \tilde f_i$, and $\bar{\tilde f}_{t-1}=\frac1B\sum_{j=1}^B \tilde f'_j$. Thus, for the second-order moments used by B2P-U, the complete cross-batch estimator can also be evaluated exactly in $O(B)$ time. Our implementation uses sampled pairs for simplicity and to match the cross-batch sketch update described in \Cref{sec:b2p-u}.

\section{Detailed Experiment Settings}
\label{app:experiment_settings}

\subsection{Synthetic Dataset}
\label{app:synthetic_dataset}

\subsubsection{Exp.~1: Imbalanced mixture regression}

\paragraph{Exp.~1: Imbalanced mixture regression (synthetic).}
We construct an imbalanced two-component Gaussian-mixture regression problem to stress-test correlation objectives under mini-batches.
Let $d=64$ and $N_{\mathrm{pop}}=4000$. We first draw an indicator $z_i\sim\mathrm{Bernoulli}(p_{\mathrm{major}})$ with $p_{\mathrm{major}}=0.9$,
and map it to a sign variable $s_i=2z_i-1\in\{+1,-1\}$, where $s_i=+1$ is the \emph{major} component.
Define a mean vector $\mu\in\mathbb{R}^d$ with $\mu_1=\texttt{mean\_sep}=3.0$ and $\mu_j=0$ for $j\ge2$.
We then sample features
\[
x_i \sim \mathcal{N}(s_i \mu, I_d)\qquad(\text{i.e., }x_i=\xi_i+s_i\mu,\ \xi_i\sim\mathcal{N}(0,I_d)).
\]
Next, we sample a ground-truth direction $w^\star\sim\mathcal{N}(0,I_d)$ and normalize it to unit norm.
The regression target is generated by
\[
y_i = {w^\star}^\top x_i + \Delta\, s_i + \varepsilon_i,\qquad \Delta=\texttt{shift}=8.0,
\]
with heteroscedastic noise
\[
\varepsilon_i \sim \mathcal{N}\!\left(0,\sigma_{s_i}^2\right),\qquad
\sigma_{+1}=\texttt{noise\_major}=1.0,\ \ \sigma_{-1}=\texttt{noise\_minor}=4.0.
\]
This construction yields well-defined population moments, but makes mini-batch estimates of $\mu_f,\Var(f_\theta),\Cov(y,f_\theta)$ highly variable for small batch size $B$ due to (i) strong class imbalance and (ii) component-dependent label shift and noise.

\paragraph{Exp.~1: Gradient-alignment diagnostics (PCC/CCC).}
We treat the entire synthesized dataset $\{(x_i,y_i)\}_{i=1}^{N_{\mathrm{pop}}}$ as the \emph{population} to form oracle moments and oracle gradients.
For a linear predictor $f_\theta(x)=\theta^\top x$, at each checkpoint $\theta(t)$, we compute an oracle gradient $g_{\mathrm{oracle}}(\theta)=\nabla_\theta \rho(\theta)$ (or $\nabla_\theta \mathrm{ccc}(\theta)$) using the full population.
We then estimate stochastic gradients from i.i.d.\ mini-batches of size $B$ sampled from the population:
for each checkpoint we draw $\texttt{n\_trials}=4,096$ batches and compute a per-batch gradient direction
$\hat g = -\nabla_\theta \ell(\theta;\mathcal{B})$, where $\ell$ is the naive mini-batch PCC/CCC or our B2P-Corr surrogate.
We report:
\[
\operatorname{Align} = \cos(\hat g, g_{\mathrm{oracle}}),\qquad
\operatorname{RelBias}=\frac{\left\|\mathbb{E}[\hat g]-g_{\mathrm{oracle}}\right\|_2}{\|g_{\mathrm{oracle}}\|_2},
\]
where expectations are approximated by averaging over the 4096 sampled mini-batches.
We repeat this evaluation for $\texttt{metric}\in\{\operatorname{PCC},\operatorname{CCC}\}$ and $B\in\{2,4,8,16,32\}$.

% =========================
% Hyperparameter Table (from code)
% =========================
\begin{table}[h]
\centering
\caption{Exp.~1 hyperparameters.}
\label{tab:exp1_hparams}
\setlength{\tabcolsep}{6pt}
\begin{tabular}{l l}
\toprule
\multicolumn{2}{c}{\textbf{Data generation}}\\
\midrule
Dimension $d$ & $64$ \\
Population size $N_{\mathrm{pop}}$ & $4000$ \\
Imbalance $p_{\mathrm{major}}=\mathbb{P}(s=+1)$ & $0.9$ \\
Mean separation (\texttt{mean\_sep}) & $3.0$ (only on feature 1; others $0$) \\
Label shift $\Delta$ (\texttt{shift}) & $8.0$ \\
Noise std (major/minor) & $\sigma_{+1}=1.0$, $\sigma_{-1}=4.0$ \\
Data seed & $0$ \\
\midrule
\multicolumn{2}{c}{\textbf{Evaluation protocol (gradient diagnostics)}}\\
\midrule
Metrics & $\{\operatorname{PCC}, \operatorname{CCC}\}$ \\
Batch sizes $B$ & \{2, 4, 8, 16, 32\} \\
Checkpoint grid & 60 points in $[0,1]$ (linear interpolation; renormalized) \\
Mini-batches per checkpoint & $\texttt{n\_trials}=4096$ \\
Random-direction seed (for $\theta_{\mathrm{rand}}$) & \texttt{set\_seed(123)} inside each config run \\
\midrule
\multicolumn{2}{c}{\textbf{B2P-Corr estimator hyperparameters}}\\
\midrule
Shared: step size $\alpha$ & $0.02$ \\
Shared: numerical $\epsilon$ & $10^{-8}$ \\
B2P-U reservoir size & $1024$ \\
\bottomrule
\end{tabular}
\end{table}

\subsubsection{Exp.~2: isotropic linear regression}

\paragraph{Exp.~2: Isotropic linear regression (data generation).}
We construct an isotropic linear regression problem in $\R^d$ with additive Gaussian noise.
Fix the dimension $d=256$ and sample the ground-truth direction $w^\star\sim\mathcal{N}(0,I_d)$, then normalize it to unit norm $\|w^\star\|_2=1$.
Next, we generate i.i.d.\ covariates $x_i\sim\mathcal{N}(0,I_d)$ and responses
\begin{equation}
    y_i = {w^\star}^\top x_i + \varepsilon_i,\qquad \varepsilon_i\sim\mathcal{N}(0,\sigma^2),\ \sigma=3.0,
\end{equation}
for a total of $N_{\text{train}}+N_{\text{test}}=50{,}000+10{,}000$ samples. We split the first $N_{\text{train}}$ examples for training and the remaining $N_{\text{test}}$ for testing.

\paragraph{Exp.~2: Mini-batch PCC optimization.}
We train a bias-free linear predictor $f_\theta(x)=\theta^\top x$ (implemented as \texttt{Linear(d,1,bias=False)}), where $\theta$ is initialized as $\theta\sim\mathcal{N}(0,10^{-4}I_d)$ (std.\ $10^{-2}$).
We optimize the PCC objective using Adam for 5{,}000 steps with mini-batch size $B$ (we run separate experiments for each $B$). We evaluate (i) \emph{DirectionRecovery} $=\cos(\theta,w^\star)$ and (ii) \emph{TestPCC} computed on the \emph{full} test set. Metrics are logged every 100 steps (and at step 1) and averaged over 3 random seeds (which control model initialization and data-loader shuffling).

\begin{table}[h]
\centering
\caption{Exp.~2 hyperparameters}
\label{tab:exp2_hparams}
\setlength{\tabcolsep}{6pt}
\begin{tabular}{l l}
\toprule
\multicolumn{2}{c}{\textbf{Data generation}}\\
\midrule
Dimension $d$ & $256$ \\
Train / test size & $N_{\mathrm{train}}=50{,}000$, $N_{\mathrm{test}}=10{,}000$ \\
Covariates & $x\sim\mathcal{N}(0,I_d)$ \\
Ground-truth direction & $w^\star\sim\mathcal{N}(0,I_d)$, normalized to $\|w^\star\|_2=1$ \\
Response model & $y={w^\star}^\top x+\varepsilon$ \\
Noise & $\varepsilon\sim\mathcal{N}(0,\sigma^2)$ with $\sigma=3.0$ \\
Dataset seed & $0$ \\
\midrule
\multicolumn{2}{c}{\textbf{Training \& evaluation protocol}}\\
\midrule
Initialization & $\theta\sim\mathcal{N}(0,10^{-4}I)$ (\texttt{std}\ =$10^{-2}$) \\
Batch size $B$ & $B\in\{2,4,8,16\}$ \\
Training steps & $5,000$ \\
Optimizer & Adam \\
Learning rate & $5\times 10^{-3}$ \\
Weight decay & $10^{-3}$ \\
Gradient clipping & $\|\nabla\|_2 \le 10$ \\
\midrule
\multicolumn{2}{c}{\textbf{B2P-Corr estimator hyperparameters}}\\
\midrule
Shared: step size $\alpha$ & $0.05$ \\
Shared: numerical $\epsilon$ & $10^{-8}$ \\
B2P-U reservoir size & $256$ \\
\bottomrule
\end{tabular}
\end{table}

\subsection{Real-world Dataset}
\label{app:realworld_dataset}

\subsubsection{Video quality assessment (KoNViD-1k)}

The hyperparameter settings for the KoNViD-1k experiment are provided in~\Cref{tab:setting_kvq}. We strictly follow the data processing and training pipeline of the baseline KVQ~\citep{qu2025kvq}, use the same reported metrics (PCC, SROCC, and RMSE), and only replace the naive mini-batch PCC objective with our decomposable B2P-Corr and B2P-U objectives.

\begin{table}[H]
\centering
\small
\caption{KVQ \citep{qu2025kvq} experiment setting on KoNViD-1k dataset}
\begin{tabular}{l l}
\toprule
\multicolumn{2}{c}{\textbf{Model \& input}}\\
\midrule
Clip length  & $32$ \\
Input channels  & $768$ \\
Hidden channels  & $64$ \\
\midrule
\multicolumn{2}{c}{\textbf{Optimization schedule}}\\
\midrule
Epochs & $60$ \\
Warmup epochs & $1$ \\
Batch size & $B\in\{2,4,8\}$ \\
Optimizer & AdamW \\
Learning rate & $5\times 10^{-4}$ \\
Weight decay & $0.05$ \\
\midrule
\multicolumn{2}{c}{\textbf{B2P-Corr estimator hyperparameters}}\\
\midrule
Shared: step size $\alpha$ & $0.05$ \\
Shared: numerical $\epsilon$ & $10^{-8}$ \\
B2P-U reservoir size & $256$ \\
\bottomrule
\label{tab:setting_kvq}
\end{tabular}
\end{table}

\subsubsection{Spatial transcriptomic (10xProteomic)}

The hyperparameter settings for the 10xProteomic experiment are provided in~\Cref{tab:setting_st}. We strictly follow the data processing and training pipeline of the baseline EGN~\citep{yang2023egn}, use the same reported metrics (PCC, MAE and MSE), and only replace the naive mini-batch PCC objective with our decomposable B2P-Corr and B2P-U objectives.

\begin{table}[H]
\centering
\small
\caption{EGN~\citep{yang2023egn} experiment setting on 10xProteomic dataset}
\begin{tabular}{l l}
\toprule
\multicolumn{2}{c}{\textbf{Model \& input}}\\
\midrule
Number of genes  & $250$ \\
MLP dimension  & $4096$ \\
Hidden dimension & $1024$ \\
Model depth & $8$ \\
\midrule
\multicolumn{2}{c}{\textbf{Optimization schedule}}\\
\midrule
Epochs & $50$ \\
Warmup steps & $100$ \\
Batch size & $B\in\{8,16,32\}$ \\
Optimizer & AdamW \\
Learning rate & $5\times 10^{-4}$ \\
Weight decay & $1\times 10^{-4}$ \\
\midrule
\multicolumn{2}{c}{\textbf{B2P-Corr estimator hyperparameters}}\\
\midrule
Shared: step size $\alpha$ & $0.05$ \\
Shared: numerical $\epsilon$ & $10^{-8}$ \\
B2P-U reservoir size & $256$ \\
\bottomrule
\label{tab:setting_st}
\end{tabular}
\end{table}

\subsubsection{Video-based sentiment analysis (VEATIC)}

The hyperparameter settings for the VEATIC experiment are provided in~\Cref{tab:setting_veatic}. We strictly follow the data processing and training pipeline in VEATIC~\citep{ren2024veatic}, use the same reported metrics (CCC, PCC, RMSE), and only replace the naive mini-batch CCC objective with our decomposable B2P-Corr and B2P-U objectives.

\begin{table}[H]
\centering
\small
\caption{VEATIC~\citep{ren2024veatic} experiment setting}
\begin{tabular}{l l}
\toprule
\multicolumn{2}{c}{\textbf{Model \& input}}\\
\midrule
MLP dimension  & $2048$ \\
Hidden dimension & $2048$ \\
Model depth & $6$ \\
Dropout & $0.4$ \\
Backbone & resnet50 \\
\midrule
\multicolumn{2}{c}{\textbf{Optimization schedule}}\\
\midrule
Epochs & $20$ \\
Batch size & $B\in\{4, 10, 20\}$ \\
Optimizer & Adam \\
Learning rate & $5\times 10^{-4}$ \\
Weight decay & $1\times 10^{-8}$ \\
Betas & $[0.9,0.999]$\\
\midrule
\multicolumn{2}{c}{\textbf{B2P-Corr estimator hyperparameters}}\\
\midrule
Shared: step size $\alpha$ & $0.05$ \\
Shared: numerical $\epsilon$ & $10^{-8}$ \\
B2P-U reservoir size & $256$ \\
\bottomrule
\label{tab:setting_veatic}
\end{tabular}
\end{table}

\subsection{Hardware Resources}
All experiments were conducted on a server running Ubuntu 22.04, equipped with 6 NVIDIA A100 GPUs (80GB VRAM each) and 512GB of system RAM.

\section{Extended Results and Analysis}
\label{app:reseults_and_analysis}

\subsection{VEATIC Dataset Arousal Result}
\label{app:veatic_arousal}

\begin{table*}[h]
\centering
\caption{\textbf{VEATIC arousal (CCC) results.} We keep the baseline architecture, optimizer and auxiliary losses fixed and replace only the naive mini-batch CCC objective with B2P-Corr (B2P-EMA/B2P-U). B2P consistently improves CCC/PCC and reduces RMSE across batch sizes. ``Default'' denotes the batch size used in the original baseline. Results are reported as mean$\pm$std (bold/underline denote best/second best).}
\fontsize{6.1}{6.5}\selectfont 
\setlength{\tabcolsep}{1pt}
\begin{tabular}{c|l|ccc|ccc|ccc}  % Changed first 'l' to 'c' for horizontal centering
\toprule
\multirow{2}{*}{\textbf{Dataset}} & \multicolumn{1}{c|}{\textbf{Batch-size}} &
\multicolumn{3}{c|}{\textbf{4}} & 
\multicolumn{3}{c|}{\textbf{10}} & 
\multicolumn{3}{c}{\textbf{20 (Default)}} \\
\cmidrule(lr){2-2} \cmidrule(lr){3-5} \cmidrule(lr){6-8} \cmidrule(lr){9-11}
& \multicolumn{1}{c|}{\textbf{Metrics}} 
& \textbf{CCC}~$\uparrow$ & \textbf{PCC}~$\uparrow$ & \textbf{RMSE}~$\downarrow$
& \textbf{CCC}~$\uparrow$ & \textbf{PCC}~$\uparrow$ & \textbf{RMSE}~$\downarrow$
& \textbf{CCC}~$\uparrow$ & \textbf{PCC}~$\uparrow$ & \textbf{RMSE}~$\downarrow$ \\
\midrule
\multirow{3}{*}[-2pt]{\rotatebox[origin=c]{90}{\shortstack{VEATIC\\Arousal\\(CCC)}}} & VEATIC \citep{ren2024veatic} & 0.433$_{\scriptstyle\pm0.025}$& 0.515$_{\scriptstyle\pm0.019}$& 0.311$_{\scriptstyle\pm0.017}$& 0.560$_{\scriptstyle\pm0.033}$& 0.602$_{\scriptstyle\pm0.029}$& 0.230$_{\scriptstyle\pm0.012}$& 0.577$_{\scriptstyle\pm0.024}$& 0.619$_{\scriptstyle\pm0.030}$& 0.234$_{\scriptstyle\pm0.015}$\\
\cmidrule(lr){2-11}
& \hspace{2pt}\textbf{+B2P-EMA} & \textbf{0.452}$_{\scriptstyle\pm0.023}$& \underline{0.538}$_{\scriptstyle\pm0.031}$& \underline{0.246}$_{\scriptstyle\pm0.015}$& \underline{0.569}$_{\scriptstyle\pm0.036}$& \underline{0.612}$_{\scriptstyle\pm0.021}$& \underline{0.216}$_{\scriptstyle\pm0.017}$& \underline{0.591}$_{\scriptstyle\pm0.028}$& \underline{0.630}$_{\scriptstyle\pm0.020}$& \underline{0.217}$_{\scriptstyle\pm0.014}$\\
& \hspace{2pt}\textbf{+B2P-U} & \underline{0.450}$_{\scriptstyle\pm0.027}$& \textbf{0.541}$_{\scriptstyle\pm0.019}$& \textbf{0.218}$_{\scriptstyle\pm0.014}$& \textbf{0.578}$_{\scriptstyle\pm0.025}$& \textbf{0.630}$_{\scriptstyle\pm0.033}$& \textbf{0.210}$_{\scriptstyle\pm0.009}$& \textbf{0.603}$_{\scriptstyle\pm0.019}$& \textbf{0.643}$_{\scriptstyle\pm0.023}$& \textbf{0.212}$_{\scriptstyle\pm0.010}$\\
\bottomrule
\end{tabular}
\label{tab:veatic_arousal}
\end{table*}

\Cref{tab:veatic_arousal} reports arousal prediction on VEATIC.
Keeping the baseline setting unchanged and replacing only the naive mini-batch CCC term with B2P-Corr yields consistent gains across batch sizes.
At $B{=}4$, B2P-U improves CCC from $0.433$ to $0.450$ and sharply reduces RMSE from $0.311$ to $0.218$ (B2P-EMA: CCC $0.452$, RMSE $0.246$).
The improvements persist at $B{=}10$ (best CCC $0.578$ vs.\ $0.560$, RMSE $0.210$ vs.\ $0.230$) and at the default $B{=}20$, where B2P-U achieves the best overall CCC/PCC ($0.603/0.643$) with lower RMSE ($0.212$ vs.\ $0.234$).
Overall, B2P provides a practical and effective drop-in replacement for stabilizing CCC optimization on this sentimental video-based analysis dataset.

\subsection{Direction recovery and test set CCC for synthetic Exp. 2}
\label{app:exp2_ccc}

\Cref{fig:exp2_ccc} reports \textsc{DirectionRecovery} and CCC on the isotropic linear-regression test set for batch sizes $B\in\{2,4,8,16\}$. The overall behavior closely mirrors the PCC results in \Cref{fig:exp2}. In the extreme mini-batch regime ($B=2$), both B2P-EMA and B2P-U remain stable and continue to improve throughout training, reaching $\approx 0.85$ \textsc{DirectionRecovery} and $\approx 0.27$ CCC. Notably, this CCC is comparable to the naive method trained with a much larger batch size ($B=16$, $\approx 0.29$). In contrast, naive mini-batch CCC optimization degrades sharply at $B=2$, attaining only $\approx 0.18$ CCC and exhibiting substantially worse alignment with the ground-truth direction. Overall, Exp.~2 demonstrates that B2P-Corr enables reliable CCC maximization in the micro-batch setting, where the naive mini-batch objective suffers from severe batch-to-population bias and can become unstable.

\begin{figure*}[h]
  \centering
  \begin{subfigure}[t]{0.25\textwidth}
    \centering
    \includegraphics[width=\linewidth]{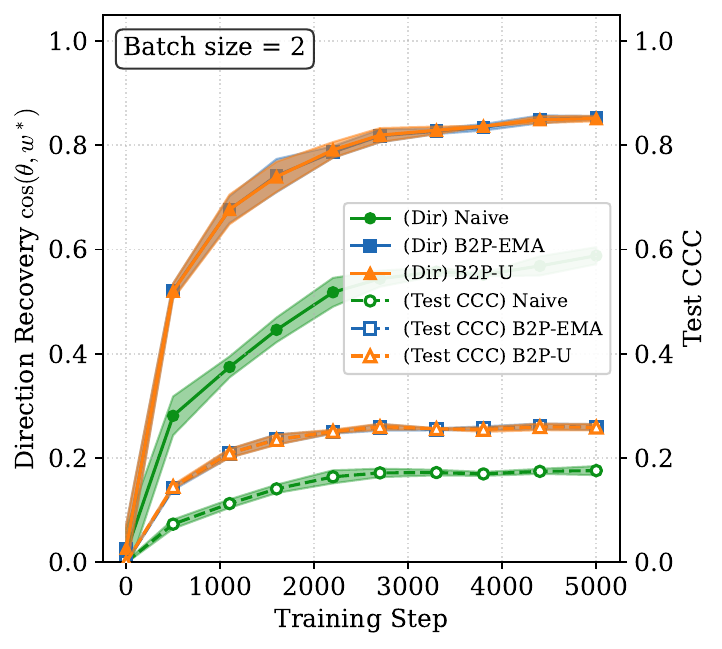}
    \caption{CCC: training ($B=2$)}
    \label{fig:exp2_ccc_a}
  \end{subfigure}\hfill
  \begin{subfigure}[t]{0.25\textwidth}
    \centering
    \includegraphics[width=\linewidth]{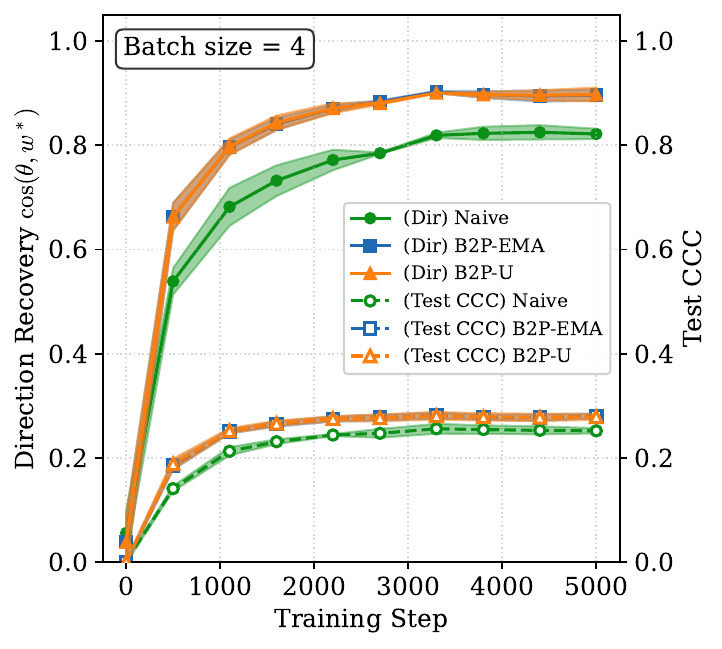}
    \caption{CCC: training ($B=4$)}
    \label{fig:exp2_ccc_b}
  \end{subfigure}\hfill
  \begin{subfigure}[t]{0.25\textwidth}
    \centering
    \includegraphics[width=\linewidth]{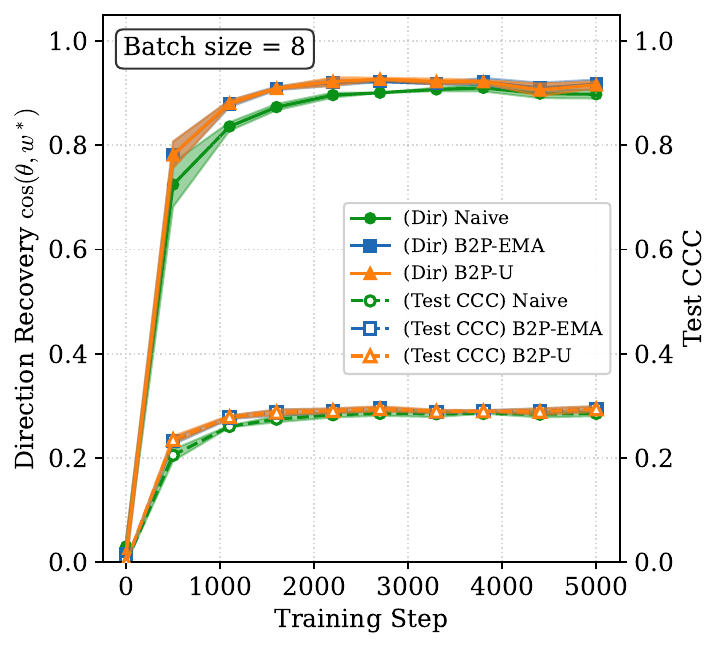}
    \caption{CCC: training ($B=8$)}
    \label{fig:exp2_ccc_c}
  \end{subfigure}\hfill
  \begin{subfigure}[t]{0.25\textwidth}
    \centering
    \includegraphics[width=\linewidth]{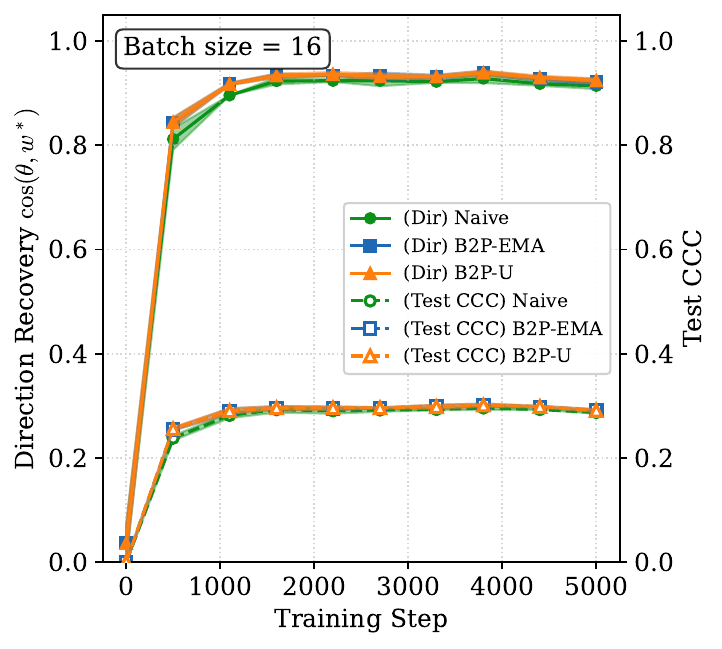}
    \caption{CCC: training ($B=16$)}
    \label{fig:exp2_ccc_d}
  \end{subfigure}
  \caption{For Exp.~2, we show DirectionRecovery and TestCCC vs. training steps for different batch sizes. With $B{=}2$, B2P-EMA/B2P-U shows largest increment comparing to naive mini-batch optimization. The gap narrows as $B$ increases.}
  \label{fig:exp2_ccc}
\end{figure*}

\subsection{B2P-EMA hyperparameter $\alpha$ sensitivity}
\label{app:alpha_sensitivity}

\Cref{fig:sensitivity_a,fig:sensitivity_b} report test-set PCC on KoNViD-1k for B2P-EMA with different smoothing factors $\alpha$ under $B{=}2$ and $B{=}8$. Overall, B2P-EMA is stable across hyperparameter choices. With small batches ($B{=}2$), smaller smoothing factors perform better: a smaller $\alpha$ updates the running moment sketch less aggressively, which is desirable because moment estimates from very small batches are noisier. Consequently, $\alpha{=}0.01$ outperforms $\alpha{=}0.2$ ($0.898$ vs.\ $0.893$). When increasing the batch size to $B{=}8$, moment estimates concentrate, and larger $\alpha$ values (e.g., $\alpha{=}0.10$ or $\alpha{=}0.20$) outperform $\alpha{=}0.01$ ($0.898$ vs.\ $0.893$). Overall, we recommend using a larger smoothing factor for larger batch sizes.

\begin{figure*}[h]
  \centering
  \begin{subfigure}[t]{0.25\textwidth}
    \centering
    \includegraphics[width=\linewidth]{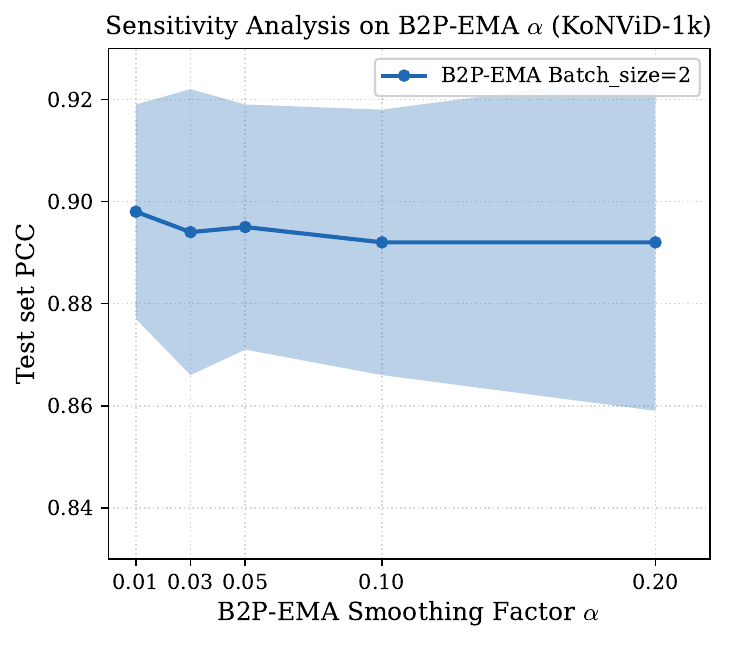}
    \caption{B2P-EMA $\alpha$ ($B=2$)}
    \label{fig:sensitivity_a}
  \end{subfigure}\hfill
  \begin{subfigure}[t]{0.25\textwidth}
    \centering
    \includegraphics[width=\linewidth]{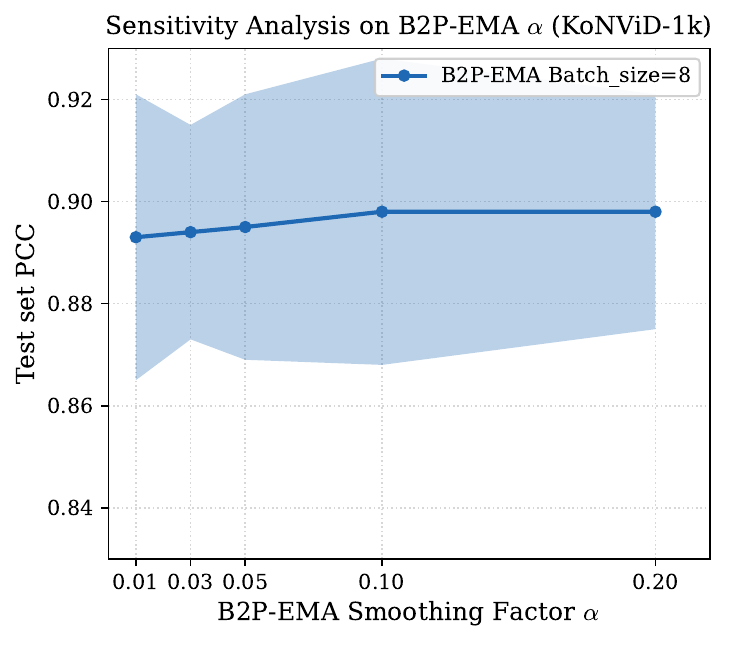}
    \caption{B2P-EMA $\alpha$ ($B=8$)}
    \label{fig:sensitivity_b}
  \end{subfigure}\hfill
  \begin{subfigure}[t]{0.25\textwidth}
    \centering
    \includegraphics[width=\linewidth]{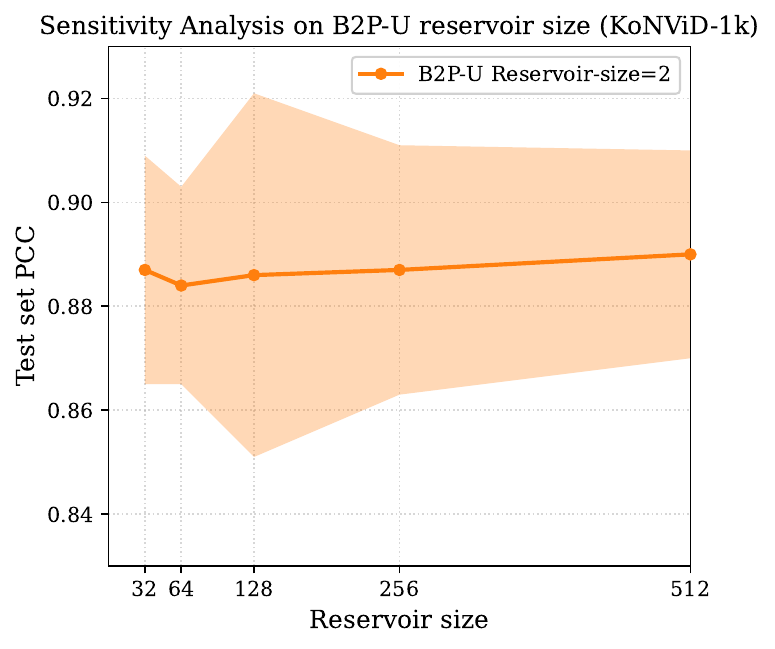}
    \caption{B2P-U Reservoir ($B=2$)}
    \label{fig:sensitivity_c}
  \end{subfigure}\hfill
  \begin{subfigure}[t]{0.25\textwidth}
    \centering
    \includegraphics[width=\linewidth]{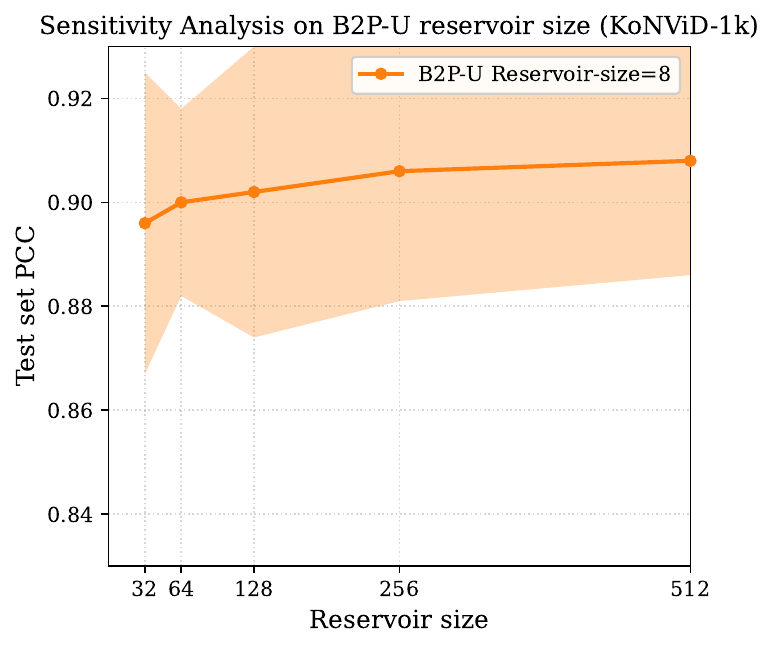}
    \caption{B2P-U Reservoir ($B=8$)}
    \label{fig:sensitivity_d}
  \end{subfigure}
  \caption{Sensitivity analysis on the KoNViD-1k dataset. We replace the naive mini-batch PCC objective with B2P-Corr and B2P-U, and evaluate (a--b) test-set PCC of B2P-EMA under different smoothing factors $\alpha$, and (c--d) test-set PCC of B2P-U under different reservoir sizes, for batch sizes $B\in\{2,8\}$.}
  \label{fig:sensitivity}
\end{figure*}

\subsection{B2P-U reservoir size sensitivity}
\label{app:reservoir-sensitivity}

\Cref{fig:sensitivity_c,fig:sensitivity_d} report test-set PCC on KoNViD-1k for B2P-U with different reservoir sizes under $B{=}2$ and $B{=}8$. B2P-U remains stable once the reservoir size exceeds $32$. For both $B{=}2$ and $B{=}8$, test-set PCC generally increases with reservoir size. However, larger reservoirs also increase the risk of stale cached predictions (if the reservoir is refreshed infrequently) and add computational overhead.

\end{document}